\newtheorem{definition}{Definition}
\newtheorem{theorem}{Theorem}
\newtheorem{remark}{Remark}
\newtheorem{corollary}{Corollary}
\newtheorem{lemma}{Lemma}
\begin{document}

\title{Stochastic Reservoir Computers}

\author*[1]{\fnm{Peter J.} \sur{Ehlers}}\email{ehlersp@arizona.edu}

\author[2]{\fnm{Hendra I.} \sur{Nurdin}}

\author*[1]{\fnm{Daniel} \sur{Soh}} \email{danielsoh@optics.arizona.edu}

\affil*[1]{\orgdiv{Wyant College of Optical Sciences}, \orgname{University of Arizona}, \orgaddress{\city{Tuscon}, \state{AZ}, \country{US}}}

\affil[2]{\orgdiv{School of Electrical Engineering and Telecommunications}, \orgname{University of New South Wales}, \orgaddress{\city{Sydney}, \country{Australia}}}

\abstract{Reservoir computing is a form of machine learning that utilizes nonlinear dynamical systems to perform complex tasks in a cost-effective manner when compared to typical neural networks. Many recent advancements in reservoir computing, in particular quantum reservoir computing, make use of reservoirs that are inherently stochastic. However, the theoretical justification for using these systems has not yet been well established. In this paper, we investigate the universality of stochastic reservoir computers, in which we use a stochastic system for reservoir computing using the probabilities of each reservoir state as the readout instead of the states themselves. In stochastic reservoir computing, the number of distinct states of the entire reservoir computer can potentially scale exponentially with the size of the reservoir hardware, offering the advantage of compact device size. We prove that classes of stochastic echo state networks, and therefore the class of all stochastic reservoir computers, are universal approximating classes. We also investigate the performance of two practical examples of stochastic reservoir computers in classification and chaotic time series prediction. While shot noise is a limiting factor in the performance of stochastic reservoir computing, we show significantly improved performance compared to a deterministic reservoir computer with similar hardware in cases where the effects of noise are small.}

\maketitle

Reservoir computing is a form of machine learning in which inputs are sequentially fed into a nonlinear dynamical system, whose state is measured and processed using an affine transformation that is trained to produce an output that closely matches some target sequence of interest. This process of feeding inputs into a system and extracting readouts is iterative, making reservoir computers (RCs) particularly well-suited to time-series modeling tasks \cite{antonik2018using, Chen:2022, LPHGBO17,larger2017high,canaday2018rapid,pathak2018model, Rafayelyan20}, and has also been successful for classification tasks \cite{Antonik:2020, Bianchi:2021, Carroll:2021,AKT21} among many possible uses \cite{Tanaka:2019}. The training performed on the readouts is a simple linear regression that can be solved exactly, giving RCs a relatively low computational cost in comparison to the neural networks in use today. The reason that simple linear regression suffices for complex time series modeling is that the nonlinear dynamical system provides readouts that are informationally rich in the inputs, realizing a nonlinear transformation of not only the current input but recent neighboring inputs as well. Using a naturally occurring physical system for the reservoir is a promising avenue toward fast and efficient computation of difficult tasks, as the natural dynamics of the system would perform the nonlinear transformations automatically, leaving only linear computations in the final output readout for a standard computer to perform.

The primary limitation of the reservoir computing approach is that a particular reservoir computing system is only suitable for a specific subset of problems. This limitation stems from the fact that the transformations applied by the reservoir are fixed. Consequently, a given reservoir will consistently process inputs in the same manner, which restricts the variety of potential outputs it can generate. Nevertheless, classes of RCs have been shown to be universal approximating classes, such as state-affine systems \cite{Grigoryeva:2018a} and echo state networks (ESNs) \cite{Grigoryeva:2018b}, among others \cite{Bishop:2022, Chen:2019, Gonon:2020, Hart:2020, LiBoyu:2023, LiZhen:2023, Martinez:2023, Nokkala:2021, Sugiura:2024}.  However, achieving an exactly accurate reservoir computing system may be impractical as it may require a large reservoir along with an extreme level of numerical precision. Consequently, one often must balance approximation accuracy with the practical limitations on size.

The principal finding of this paper is that classes of stochastic echo state networks — a type of stochastic RC — constitute universal approximating classes. Universality is an important property for classes of RCs as it guarantees that any task we want to solve can be approximated to arbitrary precision using the architecture that defines the class. Without the universality property, choosing a specific reservoir architecture in the hope of approximating a certain target function could result in complete failure. We provide a rigorously detailed mathematical proof that demonstrates universality in a promising scalable architecture — the stochastic reservoir computer. This is a critical milestone essential for nurturing and investing in an important scalable class of RCs. As outlined in the next section, our definition of a stochastic RC deviates from the conventional reservoir computing model by utilizing the probabilities of each observed reservoir state for readouts, rather than the states themselves. This modification necessitates our detailed analysis of these probabilities to confirm that they are complex, nonlinear functions of the inputs, requiring only an affine transformation for universal approximation.

The use of stochastic RCs holds significant potential because the reservoir can be designed such that the number of computational nodes and, therefore, the possible outcomes scale exponentially with the hardware size. For example, in a qubit-based quantum computer, one can perform a unitary rotation of a set of $n_q$ qubits based on the input at a given time step, then measure the qubits in the computational basis. By running this process many times, we can obtain an estimate of the probability of measuring each of the $2^{n_q}$ distinct bit strings and use them as readouts for training to get an output. Another potentially promising implementation of stochastic RCs is on recently proposed classical (non-quantum) probabilistic computers that are based on probabilistic bits (p-bits) and other classical stochastic devices \cite{chowdhury24,roques-carmes23,misra22}, including classical devices that exploit quantum effects \cite{roques-carmes23}. The requirement that the stochastic RC must be run many times to get an estimate of the probabilities is a potential setback, representing a tradeoff between more computational nodes and machine runtime. However, this approach becomes valuable in scenarios where a deterministic computer lacks sufficient computational nodes to perform the necessary calculations. In such cases, a stochastic machine can achieve an equivalent number of computational nodes using a much smaller device. We examine two different designs for a stochastic RC more closely in the Results section, where we define the exponential scaling more concretely.

\subsection*{Stochastic Reservoir Computers}
\label{sec:SRC}
In this section, we will give our precise definition of a stochastic RC. There are two primary differences between this class of RCs and the deterministic RCs that are well established in the literature. First, the stochastic reservoir gives one of $M$ possible outcomes in $\mathcal{X} = \{x^{(1)},\dots,x^{(M)}\}$ at each time step $k$. Second, instead of using the outcomes directly as readouts for training to acquire an output, we use the probabilities $P_{k,a}$ of observing a distinct outcome $x^{(a)}\in\mathcal{X}$ labeled by an index $a\in\{1,\dots,M\}$ at each time step $k$ as readouts instead. Since in a deterministic RC the measured state of the system $x_k$ also serves as the readout used for training, we need to introduce language to distinguish these two concepts. For stochastic RCs, we will refer to the measured state $x_k$ as the \textit{outcome} of the reservoir, while the probabilities $P_{k,a}$ of observing an outcome $x^{(a)}$ are called the \textit{readouts}, as these are what we ultimately use for training.

The reason this construction is useful is because it is easy to achieve a large number of outcomes $M$ by stringing together a small number of stochastic systems with only a few outcomes individually and reading the outcomes out as a vector $x_k$ at each time step. For example, a binary stochastic process (a p-bit) has only two possible outcomes, but $L$ of these together have $M=2^L$ distinct outcomes, and in general a vector outcome $x$ whose components each have $m$ possible outcomes will have as an ensemble $M=m^L$ distinct outcomes. This makes it easy to generate a RC with a large number of computational nodes using a fairly small device, as we will illustrate in the Results section.

Mathematically, the action of the stochastic reservoir is described by
\begin{align}
\label{eq:RCreadout}
    x_{k+1} &= F(x_k, u_k) \\
    F(x, u) &\in \mathcal{X} = \{x^{(1)},\dots,x^{(M)}\},
\end{align}
Thus the outcome of a reservoir measurement at time step $k$ is restricted to a specific set of $M$ possible vectors of length $L$, each with its own probability of occurring. The probabilities involved in this process are defined as
\begin{align}
    P_{k,a} &\equiv P(x_k=x^{(a)})  \\
    p_{ab}(u) &\equiv P(F(x, u)=x^{(a)}|x=x^{(b)}).
\label{eq:pabdef}
\end{align}
In other words, $P_{k,a}$ is the probability at time step $k$ of measuring outcome $x^{(a)}$, while $p_{ab}(u)$ is the conditional probability of measuring $F(x, u)=x^{(a)}$ with an input $u$ given that the starting outcome was $x^{(b)}$. Note that the conditional probability has no dependence on $k$ as it describes the effect that the reservoir has on the probabilities of each outcome, and the reservoir dynamics do not change over time.

Using these probabilities, we can then write a recursive definition of the probabilities $P_{k,a}$ using the definition of conditional probability to get
\begin{align}
    P_{k+1,a} &= P(F(x_k, u_k) = x^{(a)}) \\
    &= \sum_{b=1}^{M}P(F(x_k, u_k) = x^{(a)} | x_k = x^{(b)}) P(x_k = x^{(b)}) \\
    &= \sum_{b=1}^{M}p_{ab}(u) P_{k,b}
\end{align}
This can be written more succinctly in matrix form as $P_{k+1} = p(u_k) P_k$. Therefore, any stochastic RC as defined here is a Markov chain \cite{Ross:2023}, in particular a controlled discrete-time Markov chain \cite{Meyn:2009} in which the input sequence $(u_k)_{k\in\mathbb{Z}}=\{u_k\left|k\in\mathbb{Z}\right.\}$ acts as the fixed control sequence for the controlled Markov chain. The controlled transition probability matrix $p(u)$ whose elements are defined by \eqref{eq:pabdef} is determined by the dynamical reservoir. This means that, rather than using the finite number of outcomes in the set $\mathcal{X}$ as readouts, we can instead use the probability vectors $P_k$ as our readouts and train those to fit a target series. Thus the controlled Markov chain forms the basis of stochastic reservoir computing, and we can then write the RC equations as 
\begin{align}
\label{eq:probres}
    P_{k+1} &= p(u_k) P_k \\
    \hat{y}_k &= W^\top P_k,
\label{eq:yhat}
\end{align}
where $\hat{y}_k$ is our estimate for some target value $y_k$ which is fit by training the $M$-dimensional weight vector $W$. This causes $P_k$ and $\hat{y}_k$ to have dependence on all past inputs $\{u_{k-1}, u_{k-2}, \dots\}$ while remaining independent of the current input $u_k$. The reasoning for this is that physical systems do not have an instantaneous response to their environment, so in modeling the functional dependence of such a system with a RC we should also avoid using the current value of the input.

In practice, a large stochastic RC that would be suitable for finding good approximations to difficult problems will have an enormous state space that will make an analytic calculation of the probabilities $P_k$ virtually impossible. Instead, we can obtain estimates for $P_k$ by performing many independent runs of the stochastic RC through the training and test data and averaging over the results. In the Results section, we will examine the effects of estimating $P_{k,a}$ using a finite number of measurements versus using the exact probabilities.

\subsection*{Stochastic Echo State Networks}
Our proof of universality will focus on a specific type of stochastic reservoir computing based off of the echo state network (ESN). The action of the stochastic ESN is described by
\begin{align}
\label{eq:stocESN}
    z_k &= A x_k +B u_k \\
    x_{k+1,i} &= \Gamma(z_{k,i}),
\end{align}
where $\Gamma(\zeta)\in\{\xi^{(1)},\dots,\xi^{(m)}\}$ is a scalar stochastic activation function with $m$ outcomes that is a function of some scalar input $\zeta$. The probabilities then evolve via a controlled Markov process according to Eq. \eqref{eq:probres}, and the outputs are derived from the probabilities as in Eq. \eqref{eq:yhat}. The elements of the controlled transition matrix associated with the ESN are defined by
\begin{align}
    p_a(z) &= \prod_{i=1}^L\varrho_{a_i}(z_i)\\
    p_{ab}(u) &= p_a(Ax^{(b)}+Bu),
\label{eq:ESNprob}
\end{align}
where $\varrho_{a_i}(z_i) = P(\Gamma(z_i)=\xi^{(a_i)})$ is the probability of measuring $\xi^{(a_i)} \equiv x^{(a)}_i$ given the $i$th component of the vector $z$, and $L$ is the dimension of each $x_k$ in the ESN. Each $a_i$ is an integer between 1 and $m$ such that there is some bijective mapping between the set $\{a_i\left|i\in\{1,\dots,L\}\right.\}$ and the integers $a\in\{1,\dots,M=m^L\}$.  $\varrho(\zeta)$ is to be interpreted as the probability distribution associated with the scalar stochastic function $\Gamma(\zeta)$. We now have a controlled composite probability distribution $p_a(z)$ that determines the probability of measuring outcome $a\in\{1,\dots,M=m^L\}$ given a vector-valued input $z$, which decomposes into a product of independent controlled probabilities $\varrho_{a_i}(z_i)$ associated with each component of both $z$ and the measurement outcome $x^{(a)}$. Note that since we always work with bounded inputs and because there are finitely many possible values of $x$, every component of $z=Ax +Bu$ will also be bounded for any given $A$ and $B$. 

\section*{Results} 
\label{sec:examples}

\subsection*{Universality of Stochastic ESNs}

Our main result is a rigorous proof of the universality of stochastic echo state networks. More precisely, we have proven that a class of stochastic echo state networks that share a common activation function form a universal approximating class, provided that the stochastic behavior of the activation function has certain properties. This universality proof guarantees that, given a specific activation function that meets the criteria for universality, there will be a stochastic ESN that can be used to approximate any given task to arbitrary precision. In practical terms, this means that if an efficient design for implementing these stochastic ESNs were found and could be scaled up relatively easily, then that design would be sufficient for building an ESN that will successfully approximate any problem of interest. Two possible candidates for such designs are presented and analysed in the following sections.

The formal statement of our theorem establishing the universality of stochastic ESNs is as follows:
\begin{theorem}[Universality]
\label{thm:universal}
    With the set of uniformly bounded sequences $K_{R_u} \subset (\mathbb{R}^n)^{\mathbb{Z}_-}$ and a weighted metric $||\cdot||_w$ defined in Thm. \ref{thm:UCFM}, let $\mathcal{G}_w^{\varrho}$ be the class of functionals generated by stochastic ESNs defined by a controlled probability distribution $\varrho(\zeta)$ that is defined on the interval $\zeta\in[-R_\zeta,R_\zeta]$ for some $R_\zeta>0$, is continuous in $\zeta$, and satisfies $\varrho^\top(\zeta_1)\varrho(\zeta_2)>0$ for all $\zeta_1, \zeta_2\in[-R_\zeta, R_\zeta]$. If, for the invertible subspace $\mathcal{V}_{R_u}^p$ with respect to the controlled transition matrix $p(u)$ corresponding to this class of ESNs, there exists a vector $V\in\mathcal{V}_{R_u}^p$ for which $V^\top\varrho(\zeta)$ is strictly monotonic on the interval $[-R_{\mathrm{mono}}, R_{\mathrm{mono}}]$ for some $R_{\mathrm{mono}}>0$, then the class $\mathcal{G}_w^{\varrho}$ is dense in the space of functionals over the compact metric space $(K_{R_u}, ||\cdot||_w)$.
\end{theorem}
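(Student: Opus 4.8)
The plan is to obtain density from the Stone--Weierstrass theorem, applied to $\mathcal{G}_w^\varrho$ regarded as a subset of $C(K_{R_u})$, the space of continuous functionals on the compact metric space $(K_{R_u},\|\cdot\|_w)$ of Thm.~\ref{thm:UCFM}. I would check, in order: (1) every functional in $\mathcal{G}_w^\varrho$ is well defined on left-infinite inputs and $\|\cdot\|_w$-continuous; (2) $\mathcal{G}_w^\varrho$ contains the constants; (3) $\mathcal{G}_w^\varrho$ is closed under sums and products, hence is a subalgebra; and (4) $\mathcal{G}_w^\varrho$ separates the points of $K_{R_u}$. Granting (1)--(4), Stone--Weierstrass gives exactly the claimed density. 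Step (4) is the only delicate one, and it is where the monotonicity hypothesis and the invertibility of $\mathcal{V}_{R_u}^p$ get used.

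For (1), the positivity hypothesis $\varrho^\top(\zeta_1)\varrho(\zeta_2)>0$ is the engine. By continuity of $\varrho$ and compactness of $[-R_\zeta,R_\zeta]$ it upgrades to a uniform positive lower bound on that square, and since the transition probabilities factor as $p_a(z)=\prod_i\varrho_{a_i}(z_i)$ with arguments $z=Ax^{(b)}+Bu$ staying in $[-R_\zeta,R_\zeta]$ for $u\in K_{R_u}$ (the boundedness remark after \eqref{eq:ESNprob}), one gets the uniform mixing needed for the echo state and fading memory properties: a uniform Dobrushin contraction makes the backward products $p(u_{k-1})p(u_{k-2})\cdots$ converge geometrically to a unique probability vector $P_k$ that depends continuously and with fading memory on the input, so $W^\top P_0$ is a well-defined member of $C(K_{R_u})$ for every admissible $(A,B,W)$. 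For (2), taking $W=c\,\mathbf 1$ with $\mathbf 1$ the all-ones vector gives $W^\top P_k=c$ because $P_k$ is a probability vector, so the constant functionals lie in $\mathcal{G}_w^\varrho$.

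For (3) I would use parallel composition. Running two stochastic ESNs side by side on the same input sequence with independent activation noise produces another stochastic ESN built from the same scalar activation $\varrho$ — only $L$, $A$, $B$ grow — so its functional again lies in $\mathcal{G}_w^\varrho$; by independence of the noise its probability vector is the Kronecker product $P^{(1)}_k\otimes P^{(2)}_k$, evolved by $p^{(1)}(u)\otimes p^{(2)}(u)$. Since $W_1^\top P^{(1)}_k+W_2^\top P^{(2)}_k=(W_1\otimes\mathbf 1+\mathbf 1\otimes W_2)^\top(P^{(1)}_k\otimes P^{(2)}_k)$ and $(W_1^\top P^{(1)}_k)(W_2^\top P^{(2)}_k)=(W_1\otimes W_2)^\top(P^{(1)}_k\otimes P^{(2)}_k)$, the sum and product of any two functionals in $\mathcal{G}_w^\varrho$ again lie in $\mathcal{G}_w^\varrho$. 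Together with (2) this makes $\mathcal{G}_w^\varrho$ a unital subalgebra of $C(K_{R_u})$.

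The heart of the proof, and the step I expect to be hardest, is (4). Given distinct sequences $\mathbf u=(u_j)_{j\le 0}$ and $\mathbf v=(v_j)_{j\le 0}$, fix a lag $\ell$ and a scalar coordinate $r$ with $u_{-\ell}\neq v_{-\ell}$ in component $r$. I would take the vector $V\in\mathcal{V}_{R_u}^p$ supplied by the hypothesis, so that $s(\zeta):=V^\top\varrho(\zeta)$ is strictly monotone — hence injective — on $[-R_{\mathrm{mono}},R_{\mathrm{mono}}]$, and choose an ESN in the class with $B$ scaled small enough that the scalar feeding node $r$ of $z=Ax+Bu$ stays inside that monotone window for all admissible states and inputs while its $B$-entry on coordinate $r$ remains nonzero. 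Then the node-$r$ expectation of the observable encoded by $V$ is the injective function $s$ of a nonconstant affine image of the input. The point of $V$ lying in the \emph{invertible} subspace $\mathcal{V}_{R_u}^p$ is precisely that this single-step readout can be propagated through the controlled Markov dynamics: there is a weight vector $W$ for which $W^\top P_0$ equals $s$ evaluated at a nonconstant continuous function of $u_{-\ell}$, plus a remainder that the fading-memory estimates of (1) render uniformly as small as desired by further scaling. Since $s$ is injective and the leading term genuinely varies with $u_{-\ell}$, the resulting functional takes different values at $\mathbf u$ and $\mathbf v$, which is point separation. Pinning down this ``transport through the reservoir'' step — extracting from the abstract invertibility of $\mathcal{V}_{R_u}^p$ an explicit $W$ that realizes a monotone functional of a prescribed past input while controlling the tail — is the main obstacle; the rest (Stone--Weierstrass, the contraction estimate, the Kronecker identities) is soft or routine. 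With (1)--(4) in place, Stone--Weierstrass yields that $\mathcal{G}_w^\varrho$ is dense in $(C(K_{R_u}),\|\cdot\|_w)$, i.e.\ universality.
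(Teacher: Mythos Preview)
Your overall architecture---Stone--Weierstrass on $(K_{R_u},\|\cdot\|_w)$ via (1) fading memory from the positivity/contraction hypothesis, (2) constants from $W=c\mathbf 1$, (3) subalgebra from Kronecker products of parallel ESNs, (4) point separation from the monotonicity hypothesis and the invertible subspace---is exactly the paper's approach, and steps (1)--(3) match the paper's arguments essentially line for line.

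Your sketch of (4), however, has a gap that the paper fills differently. You pick an \emph{arbitrary} lag $\ell$ at which the two inputs differ and then try to suppress a ``remainder'' using fading memory and scaling. This is problematic: if the inputs also differ at lags smaller than $\ell$, the forward evolution from time $-\ell$ to $0$ is not the same for $\mathbf u$ and $\mathbf v$, so the effective readout vector $\widetilde W$ you build by back-propagating through the $p(u_{-\kappa})$'s is input-dependent, and there is no single $W$ doing the job for both sequences. Fading memory controls sensitivity to the \emph{remote past}, not to the near past where additional differences may sit, so it does not rescue this. The paper's fix is to choose $k_\delta$ to be the \emph{last} time at which $u$ and $v$ differ. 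Then $u_k=v_k$ for all $k>k_\delta$, the product $\prod_{\kappa=1}^{-k_\delta-1}p(u_{-\kappa})$ is literally the same for both inputs, and invertibility of $\mathcal V_{R_u}^p$ lets you choose $W$ so that the resulting $\widetilde W$ equals the given $V$. The dependence on inputs prior to $k_\delta$ is then entirely absorbed into the (unknown) probability vectors $P_{k_\delta}(\mathbf u)$ and $P_{k_\delta}(\mathbf v)$. The second trick is that one does not try to control these vectors at all: since $\widetilde W^\top p(u)P=\sum_b R(A\xi^{(b)}+Bu)P_b$ with $R=V^\top\varrho$ is a convex combination, it lies in the interval $[\min_b R(A\xi^{(b)}+Bu),\max_b R(A\xi^{(b)}+Bu)]$ regardless of $P$. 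By strict monotonicity of $R$ on $[-R_{\mathrm{mono}},R_{\mathrm{mono}}]$, choosing $A/B$ small enough (specifically $A<B\,(u_{k_\delta}-v_{k_\delta})/(\xi_{\max}-\xi_{\min})$) makes these two intervals disjoint, which forces $G(\mathbf u)\neq G(\mathbf v)$ irrespective of the tails. So the missing ingredients are: pick the \emph{largest} index of disagreement, and replace the fading-memory tail estimate by this range/interval argument.
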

A detailed proof of this theorem and a definition of an invertible subspace with respect to a controlled transition matrix $p(u)$ is given in the Methods section. We also give as a corollary to Theorem \ref{thm:universal} the universality statement for the specific case of $m=2$, which has fewer conditions and is relevant to both the numerical examples in this work as well as for broader applications such as in quantum computers, as well as another corollary for the universality of a broader class of stochastic RCs.

\subsection*{Numerical Examples}

In the next couple of sections we will demonstrate the convergence and performance of stochastic ESNs using a couple of examples. In both examples, we use a stochastic activation function $\Gamma(\zeta)$ that outputs either $0$ or $1$ with controlled probabilities $\varrho_0(\zeta)$ and $\varrho_1(\zeta) = 1 - \varrho_0(\zeta)$ that have the required properties which ensures that the class $\mathcal{G}_w^{\varrho}$ of stochastic ESNs using $\varrho(\zeta)$ is a universal approximating class. A precise statement of these properties is given at the top of the Methods section. The designs of the two examples are also meant to emphasize the practicality of implementing a stochastic RC using readily available hardware. In particular, they can be scaled up in a relatively easy way to produce an exponentially scaling number of computational nodes, with the drawback of potentially needing a larger number of measurements to estimate the probabilities to the required level of precision.

To test the performance of these stochastic ESNs, we will use two different tasks, the first of which is the Sine-Square wave identification task \cite{Dudas:2023}. This task requires the ESN to identify whether a given section of the input is a sine wave or a square wave, yielding a output of $\hat{y}_k<0.5$ for a sine wave and $\hat{y}_k>0.5$ for a square wave. The other task we will use in this work is the Lorenz $X$ task. The Lorenz $X$ task requires the ESN to approximate a chaotic dynamical system and predict what the sequence will be 1 time step into the future. In other words, using the input sequence $(u_k)_{k\in\mathbb{Z}} = (y_{k-1})_{k\in\mathbb{Z}}$, we want the ESN to successfully predict $(y_k)_{k\in\mathbb{Z}}$ in this task. In both tasks, we run the ESN for roughly 100 steps so that the system will forget its initial state, then train the ESN using 3000 steps, then test the performance on roughly 100 time steps afterward. Further details about these tasks and the simulation parameters are given in Supplementary Information \ref{sec:apnddetails}. The data and code used to obtain our results can be found in a figshare repository \cite{figshare}.

\subsubsection*{Qubit Reservoir Network}

\begin{figure}
    \centering\includegraphics[width=0.95\linewidth]{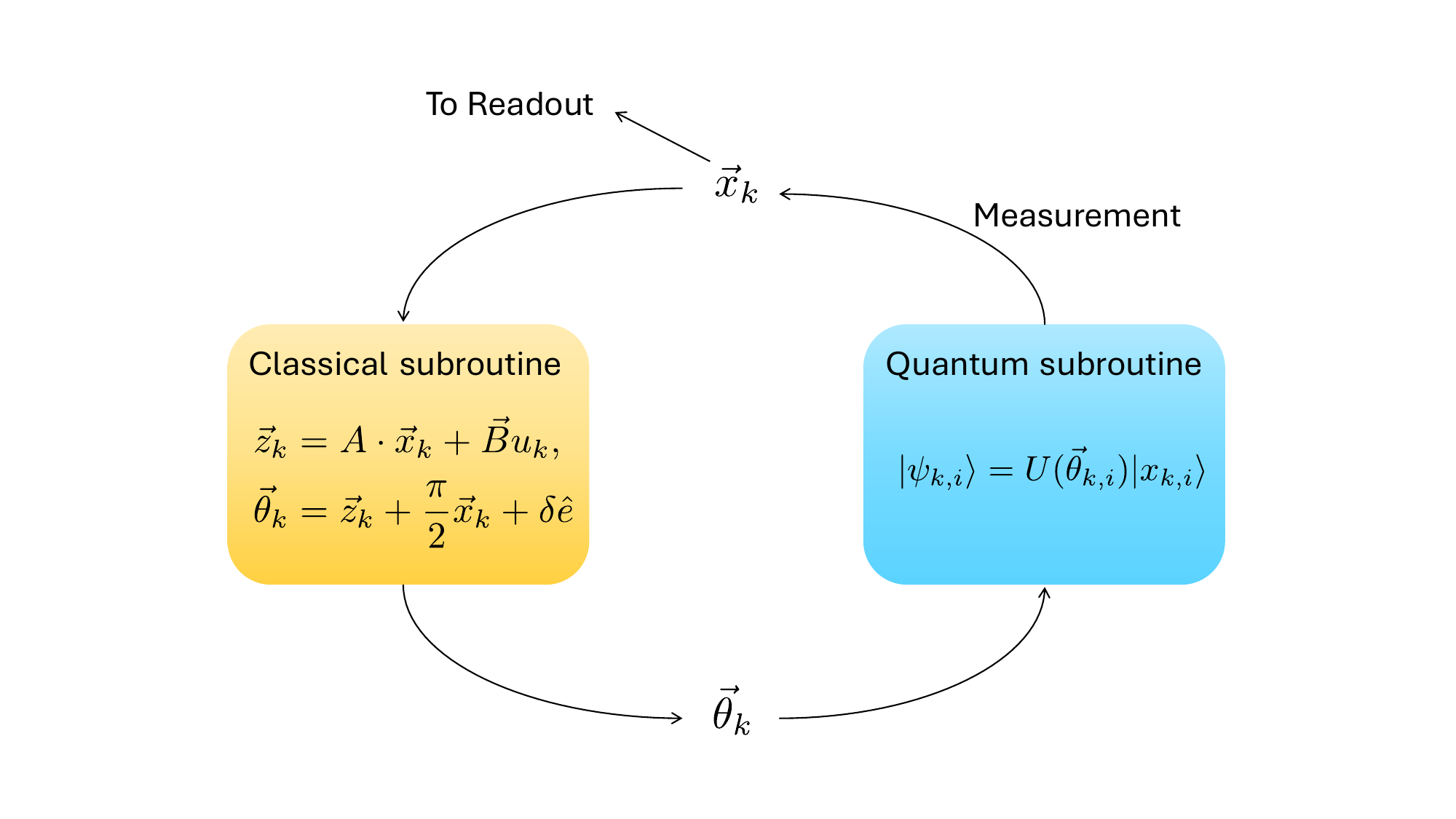}
    \caption{Diagram of the design for the qubit reservoir network.} 
    \label{fig:QRN}
\end{figure}

The qubit reservoir network is an example of a stochastic echo state network that utilizes simple, single-qubit gate operations applied to an array of qubits to generate a nonlinear stochastic activation function. This is a new hybrid quantum-classical approach that capitalizes on the ease of performing linear operators on classical computers while also leveraging a quantum device for rapid nonlinear transformations, similar to a variational quantum algorithm. The routine for each time step of the ESN is illustrated in Fig. \ref{fig:QRN}, where the linear transformation is applied classically, and the $z$ values are passed onto each qubit using a transformation operator of the form $U(\theta)=e^{-i\sigma_x \theta}$. First, at each time step $k$ we use the most recent array of qubit measurements $x_k$ and the current input $u_k$ to calculate $z_k = A x_k+B u_k$ on a classical computer. Then, we apply the transformation operator $U(z_{k,i}+\frac{\pi}{2}x_{k,i}+\delta)$ to each qubit, where $\delta$ is some constant shift parameter. The extra factor of $\frac{\pi}{2}x_{k,i}$ resets the qubits to $|0\rangle$ from the states $|x_{k,i}\rangle$ they collapsed to after the previous measurement. Finally, each qubit is then measured in the computational basis, which is recorded as $x_{k+1}$ and used immediately in the next time step. We then repeat this cycle many times and record the number of times each distinct measurement outcome $x^{(a)}$ appears at each time step $k$ to obtain estimates of their probabilities $P_{k,a}$ of occurring. These probabilities are then used as the readout vector for training and obtaining estimates $\hat{y}_k = W^\top P_k$ of the target output $y_k$. The initial state of the qubits has a negligible effect on the results because the ESN necessarily has the uniform convergence and fading memory properties, which are defined in the Methods section. These properties make it so that the initial condition will be forgotten as the reservoir is driven to some specific evolving configuration by the inputs. By letting the system run for a number of time steps before training, the dependence on the initial condition is effectively removed.

For this class of stochastic ESNs, the scalar activation function $\Gamma(\zeta)$ is the measurement of a single qubit in the computational basis after applying $U(z_{k,i}+\frac{\pi}{2}x_{k,i}+\delta)$. The conditional probabilities associated with $\Gamma$ for a single node ESN are given by
\begin{align}
    P(\Gamma = \xi'|\xi,u) &= |\langle \xi'|U(A \xi +B u+\frac{\pi}{2}\xi+\delta)|\xi\rangle|^2 \\
    &= |\langle \xi'|U(A \xi +B u+\delta)e^{-i\frac{\pi}{2}\sigma_x\xi}|\xi\rangle|^2 \\
    &= |\langle \xi'|U(A x +B u+\delta)(-i\sigma_x)^\xi|\xi\rangle|^2 \\
    &= |\langle \xi'|U(A \xi +B u+\delta)|0\rangle|^2,
\end{align}
where in these equations $A$ and $B$ are scalars and we have used that $\xi=\{0,1\}$ to simplify $(\sigma_x)^0|0\rangle=(\sigma_x)^1|1\rangle=|0\rangle$. This has the form $P(\Gamma = \xi^{(a)}|\xi,u) = \varrho_a(\zeta)$ with $\zeta=A \xi + B u$, so this is a stochastic echo state network with the controlled probability distribution
\begin{align}
    \varrho_a(\zeta) = |\langle a|U(\zeta+\delta)|0\rangle|^2
\end{align}
with $a=0,1$. Since $\varrho_1(\zeta) = 1-\varrho_0(\zeta)$ for any $\zeta$, the system is fully specified by the controlled scalar probability
\begin{align}
\label{eq:QRNprob}
    \varrho_1(\zeta) = |\langle 1|U(\zeta+\delta)|0\rangle|^2 = |\langle 1|e^{-i(\zeta+\delta)\sigma_x}|0\rangle|^2 = \sin^2(\zeta+\delta).
\end{align}
This class of stochastic ESNs can be shown to be universal when $\delta=\frac{\pi}{4}$ and when the matrix $A$ and vector $B$ are chosen so that all elements of $z_k = A x_k + B u_k$ satisfy $|z_{k,i}|<\frac{\pi}{4}$ at every time step $k$. We show this in Supplementary Information \ref{sec:apnduniversal}. The value of $\delta=\frac{\pi}{4}$ is chosen because it allows for the largest possible upper bound for $|z_{k,i}|$ that still leaves $\varrho_1(\zeta)$ monotonic and therefore generates a universal approximating class. Other choices can still achieve universality, but will require tighter bounds on $|z_{k,i}|$ and thus better numerical precision. One could use an ESN that goes outside of the bound on $|z_{k,i}|$, and it may even perform well, but there is no guarantee that a useful ESN exists outside this window.

\begin{figure}
    \centering
    \includegraphics[width=0.45\linewidth]{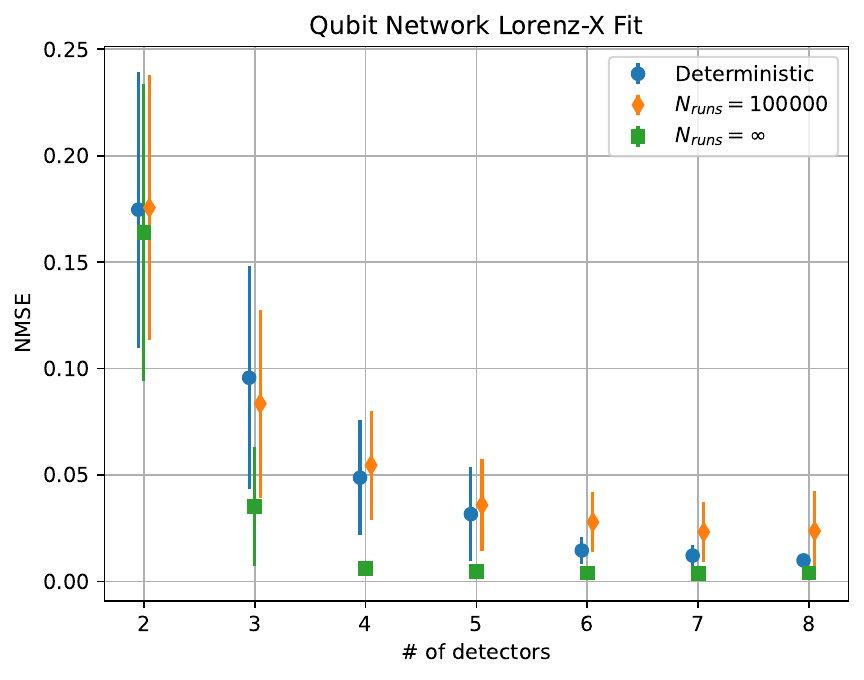}
    \includegraphics[width=0.45\linewidth]{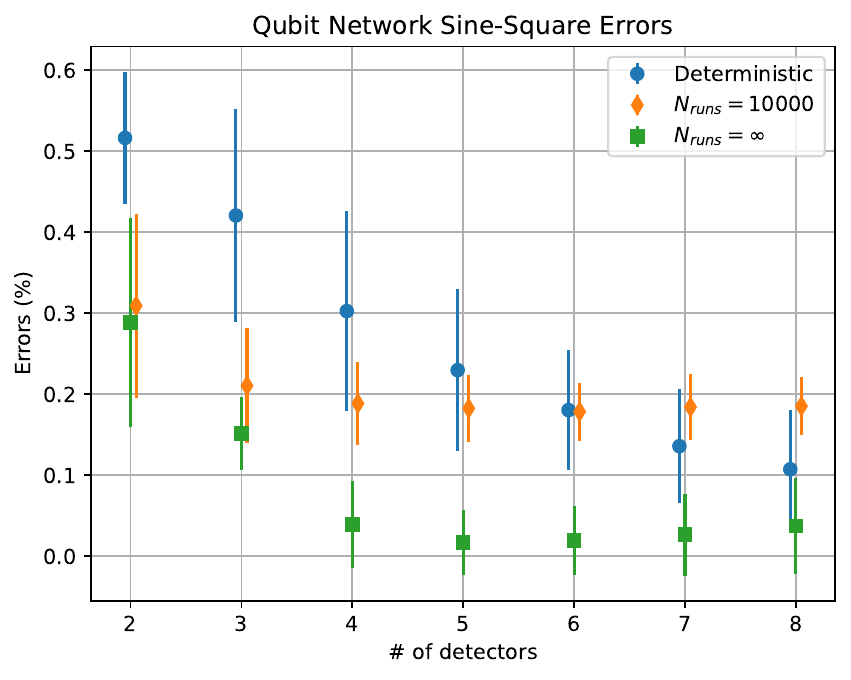}
    \caption{The left plot shows average NMSE values for the Lorenz $X$ task, while the right plot shows the average error percentage for the Sine-Square wave identification task, both as a function of the number of detectors using the qubit reservoir network. The blue dots correspond to the deterministic ESN, the orange dots correspond to the stochastic ESN averaged over 100000 runs for the Lorenz $X$ task and 10000 runs for the Sine-Square wave task, and the green dots correspond to the stochastic ESN results in the theoretical limit of infinite statistics, using the exact probabilities. Each data point represents the average over 100 random choices of $A$ and $B$ for the Lorenz $X$ task and 1000 choices for the Sine-Square wave task, and the error bars give the standard deviations of each data point over the samples of ESNs with different choices of $A$ and $B$.}
    \label{fig:SineSquareError}
\end{figure}

The left plot of Fig. \ref{fig:SineSquareError} shows a comparison of NMSE values for the Lorenz $X$ task averaged over 100 different choices of $A$ matrices and $B$ vectors for the ESN, while the right plot shows a comparison of the percentage of errors recorded for the Sine-Square wave identification task averaged over 1000 choices. In this context, ``\# of detectors" refers to the number of physical qubits that make up the reservoir. We see in the figure that the stochastic ESN using the exact probabilities performs much better than the deterministic ESN using roughly the same hardware. With finite statistics, the stochastic ESN still performs well for a small number of detectors, but beyond 4 detectors the values of the error measure seems to level off, and the deterministic ESN begins to outperform it on average by 4 detectors in the Lorenz $X$ task and at 7 detectors in the Sine-Square wave task. 

An explanation for this can be found by analysing how the linear regression behaves under the noise introduced by the stochastic reservoir, which is detailed in Supplementary Information \ref{sec:apndnoise}. The takeaway is that if $N_{\mathrm{runs}}$ is not large enough to suppress the shot noise and resolve all of the basis vectors formed from the $P_k$'s, then the performance of the stochastic reservoir will be reduced compared to what it could be with the exact probabilities.

\subsubsection*{Stochastic Optical Network}

\begin{figure}
    \centering
    \includegraphics[width=0.95\linewidth]{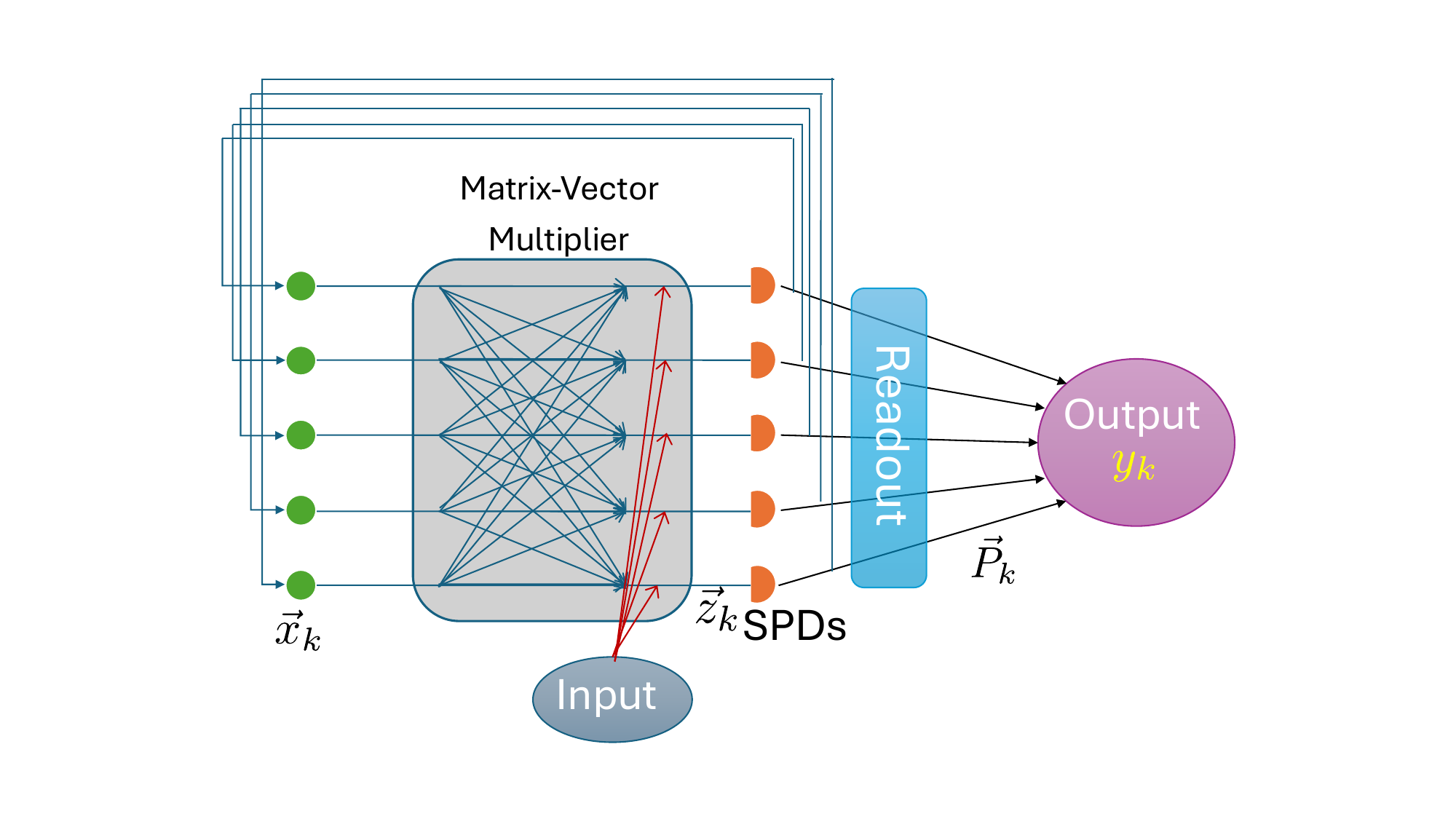}
    \caption{Diagram of the design for the stochastic optical network.} 
    \label{fig:SON}
\end{figure}

The second example of a stochastic echo state network is an optical setup based off of the optical neural network proposed in Ref. \cite{Ma:2023pre}. The novel approach in their design is the use of single photon detectors (SPDs) in the hidden layer of their neural network. We use this concept in the design of a stochastic ESN, where very low intensity lasers have their photon number measured using SPDs, generating a stochastic activation function. In this work, we focus exclusively on results using coherent laser light, but incoherent beams may be used as well, as in the main result of Ref. \cite{Ma:2023pre}. In contrast to the previous example, this design is almost entirely classical aside from the quantized measurement of light and can be built today only using well-established hardware that has existed for decades. This demonstrates that while stochastic RCs have a deep connection to quantum computing due to the stochastic nature of quantum measurement, they are not inherently quantum devices.

The design of the device is shown in Fig. \ref{fig:SON}. Here, an array of very low intensity coherent lasers which are either turned on or off at each time step $k$ encodes the vector $x_k$, where $x_{k,i}\in\{0,1\}$ for every $k$ and $i$. This laser array is passed through a matrix-vector multiplier and joined with another laser array whose intensities are proportional to the input $u_k$. This applies the linear part of the reservoir action to produce $z_k = A x_k + B u_k$. Then, each laser in the array is measured with a single photon detector, which enacts the stochastic activation function $x_{k+1,i} = \Gamma(z_{k,i})$. In this setup, the SPD only measures whether or not any light is detected. For coherent light, the probability that the vacuum state is measured is given by $P(\Gamma=0|\alpha) = e^{-\alpha^*\alpha}$ for the coherent state parameter $\alpha$, and therefore $P(\Gamma=1|\alpha) = 1-e^{-\alpha^*\alpha}$. If we include a constant offset $d$ along with the input beams, then the controlled probability distribution associated with $\Gamma$ will be
\begin{align}
\label{eq:SONprob}
    \varrho_1(\zeta) = 1-e^{-(\zeta+d)^2}
\end{align}
This class of stochastic ESNs can be shown to be universal with $\delta=1$ and when the matrix $A$ and vector $B$ are chosen so that all elements of $z_k = A x_k + B u_k$ satisfy $|z_{k,i}|<1$ at every time step $k$.  We show this in Supplementary Information \ref{sec:apnduniversal}.

\begin{figure}
    \centering
    \includegraphics[width=0.45\linewidth]{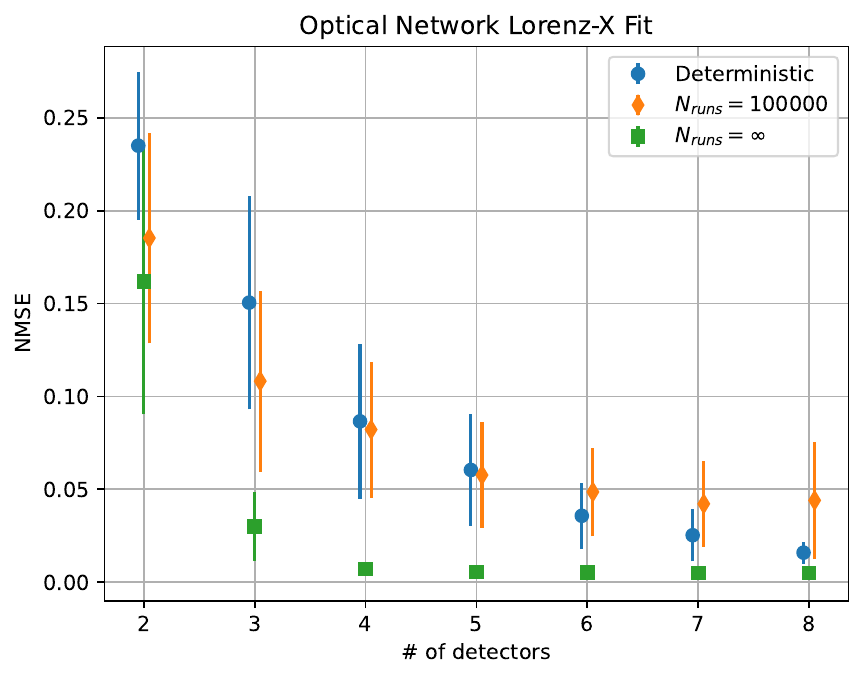}
    \includegraphics[width=0.45\linewidth]{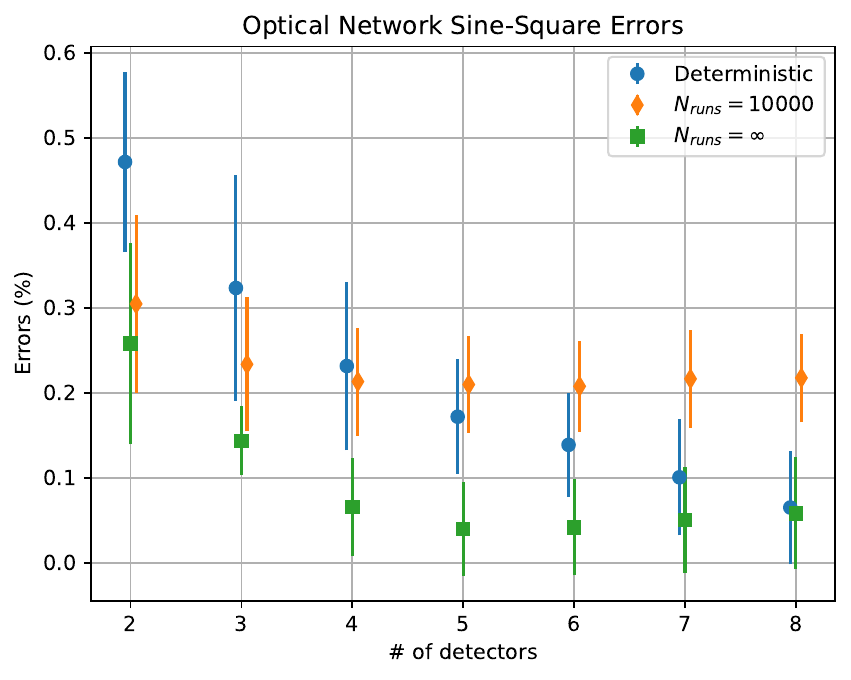}
    \caption{The left plot shows average NMSE values for the Lorenz $X$ task, while the right plot shows the average error percentage for the Sine-Square wave identification task, both as a function of the number of detectors using the stochastic optical network. The blue dots correspond to the deterministic ESN, the orange dots correspond to the stochastic ESN averaged over 100000 runs for the Lorenz $X$ task and 10000 runs for the Sine-Square wave task, and the green dots correspond to the stochastic ESN results in the theoretical limit of infinite statistics, using the exact probabilities. Each data point represents the average over 100 random choices of $A$ and $B$ for the Lorenz $X$ task and 1000 choices for the Sine-Square wave task, and the error bars give the standard deviations of each data point over the samples of ESNs with different choices of $A$ and $B$.}
    \label{fig:LorenzNMSE}
\end{figure}

The left plot of Fig. \ref{fig:LorenzNMSE} shows a comparison of the NMSE values associated with the Lorenz $X$ task, averaged over 100 different choices of $A$ matrices and $B$ vectors for the ESN, while the right plot shows a comparison of the percentage of errors recorded for the Sine-Square wave identification task averaged over 1000 choices. We see in the figure that the stochastic ESN using the exact probabilities once again performs much better than the deterministic one using similar hardware. With finite statistics, we see the same initially strong performance as in the previous example followed by a leveling off of improvement, with the deterministic ESN outperforming it on average by 6 detectors in the Lorenz $X$ task and by 5 detectors in the Sine-Square wave task.

\section*{Discussion}
\label{sec:conclusion}

In this work, we define stochastic RCs as RCs that produce stochastic outcomes, where the probability of each outcome rather than the individual instantiation of the outcome serves as the readout. We provided necessary and sufficient conditions for the contraction of the conditional probabilities, which directly lead to critical elements for class universality, specifically, uniform convergence and fading memory in the corresponding stochastic RC. We rigorously proved that stochastic echo state networks whose activation functions obey certain criteria form a universal approximating class. Then, we considered two example stochastic RC hardware platforms theoretically: a qubit reservoir network and a stochastic optical network. For both hardware platforms, we investigated two practical numerical task examples, namely the Sine-Square wave identification task and the Lorenz $X$ prediction task. We found in those examples that while the stochastic ESN using the exact probabilities performed much better than a deterministic ESN using similar hardware, having to get an estimate for the probabilities using a finite number of runs limited the performance of these stochastic ESNs.

From the plots of the average error measures on both the Lorenz $X$ and the Sine-Square wave tasks in Figs. \ref{fig:SineSquareError} and \ref{fig:LorenzNMSE}, we directly compared the performance of both of our examples. While the stochastic ESNs demonstrate superior performance over the deterministic ESNs with the same number of detectors, the finite number of trials for obtaining the probability distributions limited the performance of the stochastic ESNs. Moreover, we saw that generally the qubit reservoir network stochastic ESN was able to achieve equal to slightly better fits in both tasks in the stochastic case than the optical network stochastic ESN. On the other hand, the deterministic ESN seems to perform better at the Sine-Square wave tasks using the optical network activation function as opposed to the qubit reservoir activation function. Interestingly, this causes the stochastic optical network to show a bit of a better comparison to the deterministic case on the Lorenz $X$ task, while the qubit reservoir network has a more favorable comparison on the Sine-Square wave identification task. Since nearly everything was the same in this comparison between the qubit network ESN and the optical network ESN except for the activation function used, it suggests that different choices of activation function lead to changes in performance that are task dependent.

In addition to the direct use of stochastic RCs to perform specific computational tasks, these machines may also prove useful in exploring a conceptual question in quantum machine learning. Both the exponential scaling of the Hilbert space and entanglement have been claimed as potential sources of quantum advantage \cite{Gotting:2023, Kenigsberg:2006, Kora:2024, Mujal:2023}, and yet it is still unclear to what extent these properties can provide any form of computational speedup compared to a classical counterpart. Stochastic RCs as described here exhibit an exponential scaling of the state space, but do not in general employ any quantum entanglement, as can be seen with the examples in the Results section. Comparing the performance of these stochastic RCs with a genuine quantum RC that incorporates entanglement into its dynamics could clarify the advantages of quantum entanglement. By controlling for the scaling of the Hilbert space, this comparison would provide a clearer understanding of which aspects of quantum mechanics are beneficial for computational purposes.

Looking more closely at Figs. \ref{fig:SineSquareError} and \ref{fig:LorenzNMSE}, it appears that the error measures in both cases for the stochastic ESNs using the exact probabilities also level off after a certain point, much like the 10000 run ESNs as discussed in Supplementary Information \ref{sec:apndnoise}. In fact, if we keep increasing the number of detectors of the deterministic ESNs beyond the values shown in the figures, we would see the same leveling off of performance as well. Considering that adding more nodes will introduce basis vectors that always seem to get more difficult to resolve, it seems as though machine precision errors may substantially limit the performance of larger RCs. For further discussion on this point, see the end of section \ref{sec:apndnoise} in the Supplementary Information. One way to potentially avoid this would be to use an alternative to optimizing via matrix inversion, such as gradient descent. However, gradient descent methods have their own issues resolving small values \cite{McClean:2018, Kolen:2001, ShalevShwartz:2017} which would likely be present in this context. Another option would be to find reservoirs with basis vectors that are easy to resolve. These RCs would be able to achieve a manageable signal-to-noise ratio for all of its eigentasks (see Ref. \cite{Hu:2023}), but this would require a much stronger theoretical handle on how reservoir dynamics lead to strong fits, which is at present a topic in need of extensive research.

\section*{Methods}
Our proof of universality for stochastic RCs will follow the strategy used in Refs. \cite{Grigoryeva:2018a, Chen:2019}, proving that the class of stochastic RCs satisfies the criterion of the Stone-Weierstrass Theorem \cite[Theorem 7.3.1]{Dieudonne:2011}. In the next few sections, we will explicitly prove the universality of certain classes of stochastic ESNs, which implicitly proves the universality of the class containing all stochastic RCs with the contraction property because ESNs are a subclass of RCs.

To use the Stone-Weierstrass theorem, we first need to provide conditions on the controlled transition matrix $p(u)$ that guarantees that the stochastic RC will have the uniform convergence and fading memory properties, which we will describe in further detail in the next section. Then, we must establish that our class of stochastic ESNs obeys a polynomial algebra on a compact metric space and show that it separates points in that algebra. These properties are what the next two sections will demonstrate. Finally, we can then provide the proof of Theorem \ref{thm:universal} that utilizes all of these properties in conjunction with the Stone-Weierstrass theorem, and conclude with some useful corollaries.

The conditions that guarantee the universality of the associated approximating class are given here as reference. The reasons for asserting these conditions will be elaborated on in the proofs below. In terms of the probability distribution vector $\varrho(\zeta)$ defined on some interval $\zeta\in[-R_\zeta, R_\zeta]$ with $R_\zeta>0$, these conditions can be written as follows:
\begin{enumerate}
    \item $\varrho(\zeta)$ is continuous in $\zeta$ for all $\zeta\in[-R_\zeta, R_\zeta]$.
    
    \item $\varrho(\zeta)$ satisfies $\varrho^\top(\zeta_1)\varrho(\zeta_2)>0$ for all $\zeta_1, \zeta_2\in[-R_\zeta, R_\zeta]$.
    
    \item For the invertible subspace $\mathcal{V}_{R_u}^p$ defined in the Methods section, there exists a vector $V\in\mathcal{V}_{R_u}^p$ for which $V^\top\varrho(\zeta)$ is strictly monotonic on the interval $[-R_{\mathrm{mono}}, R_{\mathrm{mono}}]$ for some $R_{\mathrm{mono}}>0$.
\end{enumerate}
The first two criteria ensure that the reservoir computer has the uniform convergence and fading memory properties. More specifically, criterion 2 guarantees that the action of the reservoir is contracting in the probabilities over time, which in turn guarantees uniform convergence. This combined with the continuity of $\varrho(\zeta)$ also leads to the fading memory property. Criterion 3 is required in order to separate points in the space of input histories, which is required for a universal approximating class. The monotonicity of the vector component $V^\top\varrho(\zeta)$ ensures that any difference between inputs can be resolved, no matter how small. The requirement that the vector $V$ belongs to the invertible subspace is necessary to make sure that some vector $W$ exists that can pick out the monotonic component of $\varrho(\zeta)$ at any point in the past, which is necessary to prove separability of two input sequences that are identical after some time step in the past.

In the special case where $\varrho(\zeta) = \left(\begin{matrix} \varrho_1(\zeta) & (1-\varrho_1(\zeta)) \end{matrix}\right)^\top$, which applies to both numerical examples described earlier, the above conditions can be rewritten in terms of the scalar function $\varrho_1(\zeta)$, and will reduce to the following:

\begin{enumerate}
    \item $\varrho_1(\zeta)$ is continuous in $\zeta$ for all $\zeta\in[-R_\zeta, R_\zeta]$.
    
    \item either $\varrho_1(\zeta)\neq 0$ or $\varrho_1(\zeta)\neq 1$ for all $\zeta\in[-R_\zeta, R_\zeta]$.
    
    \item $\varrho_1(\zeta)$ is strictly monotonic on the interval $[-R_{\mathrm{mono}}, R_{\mathrm{mono}}]$ for some $R_{\mathrm{mono}}>0$.
\end{enumerate}
The second statement says that Criterion 2 is automatically satisfied when the chances of measuring either outcome are never certain. It also turns out that the invertible subspace requirement in Criterion 3 becomes trivially satisfied when there are only two measurement outcomes per detector, so we can place the monotonicity requirement directly on $\varrho_1(\zeta)$ instead. For details on why these are true, see the Corollaries below.

\subsection*{Convergence and Fading Memory}
\label{sec:proofs}

In this section, we will prove that stochastic reservoir computers are uniformly convergent and have the fading memory property at the level of the probability distributions. The strategy for proving this will utilize Theorem 3.1 of \cite{Grigoryeva:2018b} at its core. This theorem requires us to define a uniformly bounded family of sequences as well as a contracting reservoir map. A uniformly bounded family of sequences $K_R$ is defined to be the subset of sequences in $(\mathbb{R}^n)^{\mathbb{Z}_-}$ that obey
\begin{align}
    K_R = \left\{(u_k)_{k\in\mathbb{Z}_-}\in(\mathbb{R}^n)^{\mathbb{Z}_-}\right|\left.~||u_k||\leq R~\forall k\in\mathbb{Z}_-\right\},
\end{align}
where $R$ is a positive number. The vector norm $||\cdot||$ can be chosen arbitrarily, but for what follows it will be convenient to choose the 1-norm $||x||_1 = \sum_i |x_i|$. With $B_n[R]$ defined to be the closed ball of length $R$ with respect to the 1-norm in $\mathbb{R}^n$ centered on the origin, the reservoir map $f:B_N[R_x]\times B_n[R_u]\rightarrow B_N[R_x]$ is said to be contracting with respect to the 1-norm if for all  $v_1, v_2\in B_N[R_x],~u\in B_n[R_u]$ we have
\begin{align}
\label{eq:contract}
    ||f(v_1,u) - f(v_2,u)||_1 \leq \epsilon ||v_1 - v_2||_1
\end{align}
for some $0<\epsilon<1$.

Now we are ready to state a version of Theorem 3.1 from \cite{Grigoryeva:2018b} that will be of later use.
\begin{theorem}[Uniform Convergence and Fading Memory]
\label{thm:UCFM}
    With uniformly bounded sequences $K_{R_x} \subset (\mathbb{R}^N)^{\mathbb{Z}_-}$ and $K_{R_u} \subset (\mathbb{R}^n)^{\mathbb{Z}_-}$, if a continuous reservoir map $f:B_N[R_x]\times B_n[R_u]\rightarrow B_N[R_x]$ is contracting w.r.t the 1-norm, then the reservoir system associated with $f$ has the uniform convergence property, meaning that for any given input sequence $(u_k)_{k\in\mathbb{Z}_-}\in K_{R_u}$ there exists a unique solution $x\in B_N[R_x]$ to the equation $x_{k+1} = f(x_k, u_k),~\forall k\in\mathbb{Z}$. Furthermore, because $f$ is a continuous reservoir map, there exists a unique filter $U_f:K_{R_u}\rightarrow K_{R_x}$ associated with $f$ that is causal, time-invariant, and has the fading memory property, meaning that the filter $U_f$ is a continuous map between the metric spaces $(K_{R_u}, ||\cdot||_w)$ and $(K_{R_x}, ||\cdot||_w)$ using the weighted norm $||u||_w = \sup_{k\in\mathbb{Z}_-}\{||u_k w_{-k}||_1\}$, where $w:\mathbb{N}\rightarrow(0,1]$ is some decreasing sequence that asymptotically approaches zero.
\end{theorem}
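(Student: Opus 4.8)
The plan is to observe that this is essentially Theorem~3.1 of \cite{Grigoryeva:2018b} specialized to our setting, so the first step is simply to check that the data at hand meet its hypotheses: the reservoir map $f$ is defined on a product of closed balls $B_N[R_x]\times B_n[R_u]$, it is continuous, it maps into $B_N[R_x]$, and by \eqref{eq:contract} it is a contraction in its first argument, uniformly in its second. Once this is verified the conclusion follows. For completeness I would also record the short self-contained argument, whose engine is a single application of the contraction estimate.

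For the uniform convergence (echo state) property, fix $(u_k)_{k\in\mathbb{Z}_-}\in K_{R_u}$ and, for each $n\in\mathbb{N}$, build a state sequence $x^{(n)}$ by choosing $x^{(n)}_{-n}\in B_N[R_x]$ arbitrarily and iterating $x^{(n)}_{k+1}=f(x^{(n)}_k,u_k)$ forward; since $f$ maps into $B_N[R_x]$, all iterates stay there. For $m>n$ and $k\geq -n$, applying \eqref{eq:contract} exactly $k+n$ times gives $\|x^{(n)}_k-x^{(m)}_k\|_1\leq \epsilon^{\,k+n}\|x^{(n)}_{-n}-x^{(m)}_{-n}\|_1\leq 2R_x\,\epsilon^{\,k+n}$, so $(x^{(n)}_k)_n$ is Cauchy for each fixed $k$; let $x_k$ be its limit. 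Closedness of $B_N[R_x]$ gives $x_k\in B_N[R_x]$, and continuity of $f$ lets me pass to the limit to obtain $x_{k+1}=f(x_k,u_k)$. Uniqueness follows because any two solutions satisfy $\|x_k-x'_k\|_1\leq \epsilon^{\,j}\|x_{k-j}-x'_{k-j}\|_1\leq 2R_x\,\epsilon^{\,j}\to 0$. Defining $U_f(u)=x$, causality and time-invariance are immediate from the fact that the recursion is local in time and carries no explicit $k$-dependence.

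The substantive step is the fading memory property, i.e. continuity of $U_f$ as a map $(K_{R_u},\|\cdot\|_w)\to(K_{R_x},\|\cdot\|_w)$. Here I would exploit uniform continuity of $f$ on the compact set $B_N[R_x]\times B_n[R_u]$, with modulus $\omega_f$. Given $\delta>0$, first choose a cutoff $n$ so large that $2R_x\,w_n<\delta/2$, which is possible since $w_j\to 0$; then on the ``distant past'' $k\leq -n$ the $w$-weighted discrepancy between any two state sequences is already below $\delta/2$, using only the a priori bound $\|x_k-x'_k\|_1\leq 2R_x$ together with $w_{-k}\leq w_n$. On the ``recent window'' $-n\leq k\leq 0$, I propagate forward: the states at time $-n$ differ by at most $2R_x$, and at each step the inherited discrepancy is multiplied by $\epsilon<1$ while a fresh error of at most $\omega_f(\eta)$ is added, where $\eta$ bounds $\|u_k-u'_k\|_1$ on the window; summing the geometric series bounds the discrepancy on the whole window by $2R_x\,\epsilon^{\,k+n}+\omega_f(\eta)/(1-\epsilon)$, which is $<\delta/2$ once $\eta$ is small enough. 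Since $\|\cdot\|_w$-closeness forces $\|u_k-u'_k\|_1$-closeness on the finite window $k\geq -n$ (the weights there are $\geq w_n>0$), choosing the input tolerance accordingly and combining the two regimes yields $\|U_f(u)-U_f(u')\|_w<\delta$.

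The main obstacle is precisely the ordering of quantifiers in this last step: the cutoff time $-n$ must be fixed first, using only $w_j\to 0$ and the uniform state bound, and only afterwards may the required input closeness be chosen, using uniform continuity of $f$ together with the positive lower bound on the weights over the finite window. A secondary point to handle with care is that $w$ is only assumed decreasing to zero and not geometric, so the damping of the distant past must come from the weight itself, while on the finite window the damping of the initial-condition discrepancy comes entirely from $\epsilon$. The remaining ingredients — closedness of $B_N[R_x]$, and the fact noted earlier that $z=Ax+Bu$ stays bounded so that one genuinely works on compact balls — are routine.
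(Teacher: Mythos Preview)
Your proposal is correct and aligns with the paper: the paper does not prove Theorem~\ref{thm:UCFM} at all but simply states it as a version of Theorem~3.1 of \cite{Grigoryeva:2018b}, exactly as you identify in your first paragraph. Your additional self-contained sketch is sound and goes beyond what the paper provides.
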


To use Theorem \ref{thm:UCFM}, we must first establish that we have a continuous reservoir map between bounded spaces. With respect to the 1-norm, it is easy to see that every probability distribution $P_k$ belongs to $B_M[1]$ since all $M$ elements of $P_k$ are non-negative and must sum up to 1 for all $k\in\mathbb{Z}$. Eq. \eqref{eq:probres} maintains the 1-norm of the probabilities since the conditional probabilities obey $\sum_{a=1}^M p_{ab}(u) = 1$ for any $b$, so multiplication by the controlled transition matrix $p(u)$ does indeed qualify as a reservoir map $f:B_M[1]\times B_n[R_u]\rightarrow B_M[1]$. In fact, this maps vectors on the unit sphere $S_M[1]\in B_M[1]$ onto $S_M[1]$, with the unit sphere defined by $S_M(1) = \left\{x\in\mathbb{R}^M\left| ||x||=\sum_{i=1}^M |x_i|=1\right.\right\}$, so we can more specifically define $f:S_M[1]\times B_n[R_u]\rightarrow S_M[1]$ Assuming that $p(u)$ is continuous in $u\in B_n[R_u]$ for some radius $R_u$, the linearity of Eq. \eqref{eq:probres} w.r.t $P_k$ then guarantees that this reservoir map is continuous. We have now demonstrated that, given the assumptions about the continuity of $p(u)$ and that all $u_k$'s are bounded, the probabilities in a stochastic RC satisfy all criteria for Theorem \ref{thm:UCFM} aside from contraction.

Now we are ready to establish the contraction criteria for stochastic reservoir computing. The following theorem is essentially a statement about the strict contraction of finite-state Markov chains with respect to the 1-norm, which is a basic and fundamental result that is taught in courses on Markov chains \cite{Lalley:2016lec, Whitt:2013lec}. However, lectures notes like Refs. \cite{Lalley:2016lec, Whitt:2013lec} generally only provide sufficient conditions for contraction, while the conditions provided here are slightly stronger and are also necessary for convergence in the context of finite-state Markov chains w.r.t the 1-norm.
\begin{theorem}
\label{thm:contract}
    A stochastic reservoir map $f:S_M[1]\times B_n[R_u]\rightarrow S_M[1]$ defined by a controlled transition matrix $p(u)$ that is continuous as a function of $u$ is contracting w.r.t. the 1-norm if and only if the matrix $p^\top(u)p(u)$ has no zero elements for all $u\in B_n[R_u]$.
\end{theorem}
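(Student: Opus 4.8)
The plan is to reduce the contraction property to a statement about the \emph{ergodicity coefficient} of the column-stochastic matrix $p(u)$ and then identify exactly when that coefficient stays bounded away from $1$. Since the reservoir map is linear in its first argument, $f(v_1,u)-f(v_2,u)=p(u)(v_1-v_2)$, and as $v_1,v_2$ range over $S_M[1]$ the difference $d=v_1-v_2$ ranges over all zero-sum vectors with $\|d\|_1\le 2$; by homogeneity, \eqref{eq:contract} holds with a uniform $0<\epsilon<1$ if and only if
\[
\bar\tau \;:=\; \sup_{u\in B_n[R_u]}\;\tau(u)\;<\;1,
\qquad
\tau(u):=\sup_{\substack{d\neq 0\\ \sum_a d_a=0}}\frac{\|p(u)d\|_1}{\|d\|_1}.
\]
So everything comes down to computing $\tau(u)$ and relating it to the entries of $p^\top(u)p(u)$.

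First I would compute $\tau(u)$ for fixed $u$. The set $\{d:\sum_a d_a=0,\ \|d\|_1\le 1\}$ is a compact polytope whose vertices are exactly the vectors $\tfrac12(e_b-e_{b'})$ for $b\neq b'$: decomposing a zero-sum $d$ into its positive and negative parts and solving the resulting transportation problem writes $d$ as a scaled convex combination of such differences. Because $v\mapsto\|p(u)v\|_1$ is convex, its maximum over this polytope is attained at a vertex, and evaluating at $\tfrac12(e_b-e_{b'})$ gives $\tau(u)=\tfrac12\max_{b\neq b'}\|p^{(b)}(u)-p^{(b')}(u)\|_1$, where $p^{(b)}(u)$ denotes the $b$-th column of $p(u)$. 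Using the total-variation identity $\tfrac12\|p^{(b)}-p^{(b')}\|_1 = 1-\sum_a\min\{p_{ab}(u),p_{ab'}(u)\}$, the pair $(b,b')$ contributes the value $1$ precisely when the columns $p^{(b)}(u)$ and $p^{(b')}(u)$ have disjoint supports, i.e. when $\langle p^{(b)}(u),p^{(b')}(u)\rangle=\big(p^\top(u)p(u)\big)_{bb'}=0$. The diagonal entries $\big(p^\top(u)p(u)\big)_{bb}=\|p^{(b)}(u)\|_2^2$ are always positive since each column is a nonzero probability vector, so $p^\top(u)p(u)$ has a zero entry if and only if $\tau(u)=1$.

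The two implications then fall out. For the ``only if'' direction (contrapositive), if $p^\top(u_0)p(u_0)$ has a zero entry for some $u_0\in B_n[R_u]$, pick a witnessing pair $b\neq b'$ and set $v_1=e_b$, $v_2=e_{b'}\in S_M[1]$; then $\|f(v_1,u_0)-f(v_2,u_0)\|_1=\|p^{(b)}(u_0)-p^{(b')}(u_0)\|_1=2=\|v_1-v_2\|_1$, so no $\epsilon<1$ can satisfy \eqref{eq:contract}. For the ``if'' direction, suppose $p^\top(u)p(u)$ has no zero entry for every $u\in B_n[R_u]$; then for each fixed $u$ every one of the finitely many pairwise terms is strictly below $1$, hence $\tau(u)<1$. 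Continuity of $u\mapsto p(u)$ makes $u\mapsto\tau(u)$ continuous, and $B_n[R_u]$ is compact, so $\bar\tau=\max_u\tau(u)$ is attained and is $<1$; taking $\epsilon=\bar\tau$ establishes \eqref{eq:contract}.

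I expect the main obstacle to be the computation of $\tau(u)$ in the second step --- identifying the vertices of the zero-sum unit ball and justifying that the convex functional $\|p(u)\cdot\|_1$ is maximized at a vertex, which is the content of the classical Dobrushin-coefficient formula. Everything else is essentially bookkeeping: the total-variation identity, positivity of the diagonal of $p^\top(u)p(u)$, and a routine compactness argument that upgrades the pointwise bound $\tau(u)<1$ to the uniform bound $\bar\tau<1$ required by \eqref{eq:contract}. It is exactly this last step that uses the continuity hypothesis on $p(u)$; without it, each $\tau(u)$ could be $<1$ while $\sup_u\tau(u)=1$, and contraction would fail.
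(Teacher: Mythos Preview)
Your proof is correct and takes a genuinely different route from the paper's. The paper works directly with the triangle inequality
\[
\sum_a\Bigl|\sum_b p_{ab}(u)(P_1-P_2)_b\Bigr|\le\sum_a\sum_b p_{ab}(u)\,|(P_1-P_2)_b|
\]
and analyzes exactly when it is saturated, tracking the signs of the products $p_{ab_1}(u)p_{ab_2}(u)(P_1-P_2)_{b_1}(P_1-P_2)_{b_2}$ and then specializing to $P_1=e_{b_1}$, $P_2=e_{b_2}$ to extract the condition $(p^\top p)_{b_1b_2}>0$. You instead pass through the Dobrushin ergodicity coefficient: you identify $\tau(u)$ with $\tfrac12\max_{b\neq b'}\|p^{(b)}(u)-p^{(b')}(u)\|_1$ via the vertex structure of the zero-sum unit ball, and then the total-variation identity $\tfrac12\|p^{(b)}-p^{(b')}\|_1=1-\sum_a\min\{p_{ab},p_{ab'}\}$ shows that a pair of columns hits the value $1$ precisely when their supports are disjoint, i.e., when $(p^\top p)_{bb'}=0$. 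Your approach is tidier and connects the statement to standard Markov-chain machinery, making both implications almost immediate once $\tau(u)$ is computed; the paper's approach is more elementary in that it never names the coefficient or the vertex decomposition, at the cost of a longer sign-pattern case analysis. Both proofs finish with the same compactness argument (continuity of $p(u)$ on the compact ball $B_n[R_u]$) to upgrade the pointwise bound $\tau(u)<1$ to a uniform $\epsilon<1$.
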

Since convergence criterion of the conditional probability matrix is a well-known result, the contents of Thm. \ref{thm:contract} are almost surely in previous literature, and we do not provide a proof in this section. For completeness, we do provide a proof in Supplementary Information \ref{sec:apndproof}.

With this theorem, we may now establish the full criteria for the uniform convergence and fading memory properties of the probability distributions in a stochastic RC.
\begin{lemma}
    With the uniformly bounded sequence $K_{R_u} \subset (\mathbb{R}^n)^{\mathbb{Z}_-}$, if a stochastic reservoir map $f:S_M(1)\times B_n[R_u]\rightarrow S_M(1)$ (using the 1-norm) defined by the controlled transition matrix $p(u)$ is continuous in $u$ and $p^\top(u)p(u)$ has strictly positive elements for all $u\in B_n[R_u]$, then the reservoir system associated with $f$ has the uniform convergence and fading memory properties with respect to the probabilities for obtaining each outcome.
\end{lemma}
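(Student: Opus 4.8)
The plan is to assemble pieces that are already in place: the observation (made in the paragraph before Theorem \ref{thm:contract}) that left multiplication by $p(u)$ is a continuous reservoir map on the probability simplex, Theorem \ref{thm:contract} for the contraction property, and Theorem \ref{thm:UCFM} to extract uniform convergence and fading memory. So the lemma is essentially a corollary obtained by chaining these three facts; almost all of the real content lives in Theorems \ref{thm:contract} and \ref{thm:UCFM}.

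First I would set up the map. Define $f(P,u) = p(u)P$. Since each column of $p(u)$ is a probability vector, $\sum_a (p(u)P)_a = \sum_b P_b \sum_a p_{ab}(u) = \sum_b P_b = 1$ and $p(u)P\ge 0$ entrywise, so $f$ sends $S_M[1]\times B_n[R_u]$ into $S_M[1]$; moreover $f$ is jointly continuous, being linear (hence continuous) in $P$ and continuous in $u$ by hypothesis. Identifying $N=M$ and $R_x=1$ and noting that $S_M[1]\subset B_M[1]$ is invariant under $f$, we are exactly in the setting of Theorem \ref{thm:UCFM}, with the only hypothesis left to verify being contraction with respect to the $1$-norm.

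Next I would invoke Theorem \ref{thm:contract}. The assumption that $p^\top(u)p(u)$ has strictly positive entries for every $u\in B_n[R_u]$ is precisely its "no zero elements" condition, so for each fixed $u$ the map $f(\cdot,u)$ is a $1$-norm contraction of $S_M[1]$. The one genuine point requiring care --- and the step I expect to be the main obstacle --- is upgrading this pointwise contraction to a \emph{uniform} constant $\epsilon<1$ valid simultaneously for all $u\in B_n[R_u]$, as Theorem \ref{thm:UCFM} (via \eqref{eq:contract}) requires a single $\epsilon$. Here I would use compactness of $B_n[R_u]$: the contraction modulus supplied by the proof of Theorem \ref{thm:contract} (a Dobrushin-type coefficient of $p(u)$) is built from the entries of $p(u)$ through continuous operations --- sums, minima and maxima --- hence defines a continuous function $\epsilon(u)$ on $B_n[R_u]$ with $\epsilon(u)<1$ at every point; since $B_n[R_u]$ is compact, $\epsilon^\ast = \max_{u\in B_n[R_u]}\epsilon(u)<1$, and $f$ is contracting with the single constant $\epsilon^\ast$.

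Finally, with $f\colon S_M[1]\times B_n[R_u]\to S_M[1]$ now a continuous, uniformly contracting reservoir map, Theorem \ref{thm:UCFM} applies verbatim and gives both conclusions at once: for every input sequence in $K_{R_u}$ there is a unique solution sequence of probability vectors, so the stochastic RC has the uniform convergence property at the level of the probabilities; and the induced filter is causal, time-invariant, and continuous between the weighted metric spaces, i.e. it has the fading memory property. Identifying the solution sequence with the probabilities $P_k\in S_M[1]$ of obtaining each outcome yields the statement of the lemma.
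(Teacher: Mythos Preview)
Your proposal is correct and takes essentially the same approach as the paper, whose proof is the one-line remark that the lemma follows by combining Theorems \ref{thm:UCFM} and \ref{thm:contract}. Note that your uniformity-in-$u$ step is actually already subsumed in the statement of Theorem \ref{thm:contract} (its conclusion ``contracting'' refers to the definition \eqref{eq:contract}, which demands a single $\epsilon$ valid for all $u\in B_n[R_u]$, and its proof in Supplementary Information \ref{sec:apndproof} carries out the compactness argument you sketch), so that part of your discussion, while not wrong, is redundant.
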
 
This is simply proven by combining Theorems \ref{thm:UCFM} and \ref{thm:contract}.

\subsection*{Polynomial Algebra}
\label{sec:polyalg}
Our next goal is to show that the class of stochastic ESNs that obey the uniform convergence property and the fading memory property with respect to a null sequence $w$ defined as part of Thm.~\ref{thm:UCFM} forms a polynomial algebra. More specifically, writing $\mathcal{X}=\{x^{(1)},\dots,x^{(M)}\}$, for a stochastic reservoir $F:\mathcal{X}\times B_n[R_u]\rightarrow \mathcal{X}$ with an associated controlled transition matrix $p(u)$ where multiplication by $p$ generates a map $p:S_M[1]\times B_n[R_u]\rightarrow S_M[1]$, we define a functional $G_{p,W}:K_{R_u}\rightarrow\mathbb{R}$ associated with $p(u)$ and the weight vector $W$. The recursive application of the probability map $p$ using an input sequence $(u_k)_{k\in\mathbb{Z}_-}\in K_{R_u}$ yields a unique sequence of probability distributions $(P_k)_{k\in\mathbb{Z}_-}\in(S_M[1])^{\mathbb{Z}_-}$. Multiplication of one of these probabilities by the weight vector $W$ generates a map $W:S_M[1]\rightarrow \mathbb{R}$. Thus the functional $G_{p,W}$ for a stochastic RC is the map from a given input sequence $(u_k)_{k\in\mathbb{Z}_-}\in K_{R_u}$ to some output $\hat{y}_0\in\mathbb{R}$, which is the inner product of $W$ with the probability distribution generated by these inputs and the controlled transition matrix $p(u)$ at some arbitrary time, taken to be $k=0$. In the event that the target sequence for our RC is vector-valued, we can view the output as a vector whose elements are functionals $G_{p,W_i}$, each using the same controlled transition matrix $p(u)$ but different weight vectors $W_i$. Thus without loss of generality we can focus on scalar outputs knowing that the results will straightforwardly generalize to vector outputs as well.

With the set $\mathcal{G}_w$ of these functionals, which are all continuous under the weighted norm $||u||_w = \sup_{k\in\mathbb{Z}_-}\{||u_k w_{-k}||_1\}$ for a null sequence $w$, we say that $\mathcal{G}_w$ forms a polynomial algebra if for any $G_{p_1,W_1}\left((u_k)_{k\in\mathbb{Z}_-}\right), G_{p_2,W_2}\left((u_k)_{k\in\mathbb{Z}_-}\right)\in \mathcal{G}_w$ we have that $G_{p_1,W_1}\left((u_k)_{k\in\mathbb{Z}_-}\right) + \lambda G_{p_2,W_2}\left((u_k)_{k\in\mathbb{Z}_-}\right)\in\mathcal{G}_w$ for any scalar $\lambda$ and $G_{p_1,W_1}\left((u_k)_{k\in\mathbb{Z}_-}\right) * G_{p_2,W_2}\left((u_k)_{k\in\mathbb{Z}_-}\right)\in\mathcal{G}_w$ using standard scalar addition and multiplication. We specifically want to focus on the set $\mathcal{G}_w^{\varrho}$ of functionals corresponding to stochastic ESNs with a specific controlled probability distribution $\varrho$, in which any member $G_{A,B,W}^{\varrho}\left((u_k)_{k\in\mathbb{Z}_-}\right)$ of the set corresponds to a controlled probability matrix $p(u)$ defined by $\varrho, A,$ and $B$ given in Eq. \eqref{eq:ESNprob}.

We begin by noting that if we take the matrix $A$ of any stochastic ESN of size $N$ to be block diagonal with blocks of size $N_1$ and $N_2 = N-N_1$, and also separate $B$ so that
\begin{align}
\label{eq:block}
    A=\left(\begin{matrix}
        A_1 & 0 \\ 0 & A_2
    \end{matrix}\right), \quad \quad B = \left(\begin{matrix}
        B_1 \\ B_2
    \end{matrix}\right)
\end{align}
The resulting outcomes $x^{(b)}$ will decompose analogously into independent vectors $x_1^{(b_1)}$ and $x_2^{(b_2)}$, with $x^{(b)\top} = \left(\begin{matrix} x_1^{(b_1)\top} & x_2^{(b_2)\top} \end{matrix}\right)$. These components separately obey their own reservoir equations, so that the controlled composite probability distribution and transition matrix factors such that
\begin{align}
    p_{s,a_s}(z) &= \prod_{i=1}^{N_s}\varrho_{a_{s,i}}(z_i) \\
    p_{ab}(u) &= p_{1,a}(A_1x_1^{(b_1)}+B_1u)p_{2,a}(A_2x_2^{(b_2)}+B_2u),
\end{align}
with $s\in\{1,2\}$ and where $a_{s,i}$ is defined so that $\xi^{(a_{s,i})} \equiv x_{s,i}^{(a_s)}$, meaning that, when equipped with weight vectors $W_1$ and $W_2$, these subsystems are themselves stochastic ESNs corresponding to functionals $G_{A_1,B_1,W_1}^{\varrho}\left((u_k)_{k\in\mathbb{Z}_-}\right)$ and $G_{A_2,B_2,W_2}^{\varrho}\left((u_k)_{k\in\mathbb{Z}_-}\right)$ belonging to the class $\mathcal{G}_w^{\varrho}$. This means that they generate their own probability distributions $P_{1,k}$ and $P_{2,k}$, and it is not difficult to see that the distributions $P_k$ for the original ESN using $A$ and $B$ are related by $P_{k,a}=P_{1,k,a_1}P_{2,k,a_2}$ for all $a_1\in\{1,\dots,m^{N_1}\}$ and $a_2\in\{1,\dots,m^{N_2}\}$, with the outcome $a$ corresponding to measuring outcomes $a_1$ and $a_2$ simultaneously.

Form here, it is easy to show that $\mathcal{G}_w^{\varrho}$ forms a polynomial algebra. Taking the product of the two functionals defined above yields $G_{A_1,B_1,W_1}^{\varrho}\left((u_k)_{k\in\mathbb{Z}_-}\right) * G_{A_2,B_2,W_2}^{\varrho}\left((u_k)_{k\in\mathbb{Z}_-}\right) = (W_1^\top P_{1,0}) (W_2^\top P_{2,0}) = (W_1\otimes W_2)^\top(P_{1,0}\otimes P_{2,0})$. However, $P_{1,0}\otimes P_{2,0}$ is just $P_0$ for the larger ESN that obeys Eq. \eqref{eq:block}, so with $W=W_1\otimes W_2$ it is clear that the product belongs to $\mathcal{G}_w^{\varrho}$. For the additive property, we must consider $G_{A_1,B_1,W_1}^{\varrho}\left((u_k)_{k\in\mathbb{Z}_-}\right) + \lambda G_{A_2,B_2,W_2}^{\varrho}\left((u_k)_{k\in\mathbb{Z}_-}\right) = W_1^\top P_{1,0} +\lambda W_2^\top P_{2,0}$. Since $P_{1,0}$ and $P_{2,0}$ are probability distributions, we can define the vector $e_N$ of length $N$ such that $e_{N,i} = 1$ for all $i\in\{1,\dots,N\}$ so that $e_{N_1}^\top P_{1,0} = e_{N_2}^\top P_{2,0} = 1$. We can then expand $W_1^\top P_{1,0} = (W_1\otimes e_{N_2})^\top (P_{1,0}\otimes P_{2,0})$ and $W_2^\top P_{2,0} = (e_{N_1}\otimes W_2)^\top (P_{1,0}\otimes P_{2,0})$ to get $W_1^\top P_{1,0} +\lambda W_2^\top P_{2,0} = (W_1\otimes e_{N_2} + \lambda e_{N_1}\otimes W_2)^\top (P_{1,0}\otimes P_{2,0})$. Therefore, using the the larger ESN that obeys Eq. \eqref{eq:block} again but with $W = W_1\otimes e_{N_2} + \lambda e_{N_1}\otimes W_2$, we see that the sum of the functionals is also in $\mathcal{G}_w^{\varrho}$. Hence $\mathcal{G}_w^{\varrho}$ forms a polynomial algebra for any controlled probability distribution function $\varrho$.

\subsection*{Separation}
The next and most involved step is to prove that the class $\mathcal{G}_w^{\varrho}$ of stochastic ESNs defined by a scalar probability distribution function $\varrho$ separates points in its domain $K_{R_u}$ of bounded input sequences. We say that $\mathcal{G}_w^{\varrho}$ has the separation property if for any $(u_k)_{k\in\mathbb{Z}_-}, (v_k)_{k\in\mathbb{Z}_-}\in K_{R_u}$ with $(u_k)_{k\in\mathbb{Z}_-}\neq (v_k)_{k\in\mathbb{Z}_-}$ we can always find at least one functional $G_{A,B,W}^{\varrho}\in\mathcal{G}_w^{\varrho}$ such that $G_{A,B,W}^{\varrho}\left((u_k)_{k\in\mathbb{Z}_-}\right)\neq G_{A,B,W}^{\varrho}\left((v_k)_{k\in\mathbb{Z}_-}\right)$. To prove this for a class $\mathcal{G}_w^{\varrho}$, we will look at the simplest case of single-node ESNs. This property will not hold for arbitrary probability distributions $\varrho$, so we will have to place conditions on $\varrho$ that are necessary for satisfying this property.

\begin{definition}
    The vector space $\mathcal{V}_R^p\subseteq\mathbb{R}^n$ for some natural number $n$ is called the {\bf invertible subspace} with respect to the controlled transition matrix $p(u)$ that is a function of $u\in[-R,R]$ if it is the largest vector space that satisfies $p(u) \mathcal{V}_R^p = \mathcal{V}_R^p$ for all $u\in[-R,R]$.
\end{definition}

\begin{theorem}[Separation]
\label{thm:separation}
    Let $\mathcal{G}_w^{\varrho}$ be the class of stochastic ESNs defined by a controlled probability distribution function $\varrho(\zeta)$ that is defined on an interval $\zeta\in[-R_\zeta, R_\zeta]$ for some $R_\zeta>0$. Also let $\mathcal{V}_{R_u}^p\subseteq\mathbb{R}^m$ be the invertible subspace with respect to the controlled transition matrix $p(u)$ whose elements are defined by $p_{ab}(u) = \varrho_a(A \xi^{(b)} + B u)$ for $m$ distinct measurement outcomes $\xi^{(b)}$ labeled by $b\in\{1,\dots,m\}$ and for $u\in[-R_u,R_u]$. The class $\mathcal{G}_w^{\varrho}$ has the separation property if for some $R_{\mathrm{mono}}>0$ there exists a vector $V\in\mathcal{V}_{R_u}^p$ for which $V^\top\varrho(\zeta)$ is strictly monotonic on $\zeta$ in the interval $[-R_{\mathrm{mono}},R_{\mathrm{mono}}]$.
\end{theorem}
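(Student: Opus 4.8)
\medskip
\noindent\emph{Proof strategy.} The plan is to exhibit, for every pair of distinct sequences, a single member of $\mathcal{G}_w^{\varrho}$ that tells them apart, after two reductions. A single-node ESN equipped with any weight vector is itself in $\mathcal{G}_w^{\varrho}$, so it suffices to find a separating single-node ESN. And two distinct sequences $(u_k)_{k\in\mathbb{Z}_-}\neq(v_k)_{k\in\mathbb{Z}_-}$ in $K_{R_u}\subset(\mathbb{R}^n)^{\mathbb{Z}_-}$ must disagree in one fixed coordinate at some time, so taking the row vector $B$ supported on that coordinate reduces matters to the scalar-input setting $u\in[-R_u,R_u]$ of the statement. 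From here I work with a scalar single-node ESN with transition matrix $p_{ab}(u)=\varrho_a(A\xi^{(b)}+Bu)$ and recursion $P_{k+1}=p(u_k)P_k$, whose bi-infinite solution $P_k^{(u)}$ exists and is unique by the uniform convergence property established above.

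Let $-j$, $j\geq 1$, be the most recent index at which the two sequences differ, so that $u_\ell=v_\ell$ for every $-j<\ell\leq 0$; a pair differing only at the present instant is not separated by any strictly causal readout and is excluded. I would choose $A$ and $B$ small — with $A=0$ permitted when $j=1$ — so that every argument $A\xi^{(b)}+Bu$ with $|u|\leq R_u$ lies inside the window $[-R_{\mathrm{mono}},R_{\mathrm{mono}}]$ where $V^\top\varrho$ is strictly monotonic. Since the inputs agree over $(-j,0]$, the distributions at time $0$ telescope as
\begin{align}
P_0^{(u)}-P_0^{(v)} = Q\,D,\qquad Q := p(u_{-1})\cdots p(u_{-(j-1)}),\qquad D := p(u_{-j})P_{-j}^{(u)}-p(v_{-j})P_{-j}^{(v)},
\end{align}
with $Q$ the identity when $j=1$. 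The structural fact I would isolate as a lemma is that the span of all differences of reachable probability vectors is mapped onto itself by every $p(u)$ and hence sits inside the maximal invertible subspace $\mathcal{V}_{R_u}^p$; then $P_0^{(u)}-P_0^{(v)}\in\mathcal{V}_{R_u}^p$, and because $p(u)$ is invertible on $\mathcal{V}_{R_u}^p$ so is $D$. Invertibility of $Q$ on $\mathcal{V}_{R_u}^p$ lets me pick a weight $W$ with $W^\top Q\eta=V^\top\eta$ for all $\eta\in\mathcal{V}_{R_u}^p$ — this is exactly why the hypothesis requires $V\in\mathcal{V}_{R_u}^p$ — so that
\begin{align}
G_{A,B,W}^{\varrho}\big((u_k)_{k\in\mathbb{Z}_-}\big)-G_{A,B,W}^{\varrho}\big((v_k)_{k\in\mathbb{Z}_-}\big) = W^\top Q D = V^\top D,
\end{align}
and it remains to show $V^\top D\neq 0$.

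Writing $\delta:=P_{-j}^{(u)}-P_{-j}^{(v)}$ (zero when $j=1$, and in $\mathcal{V}_{R_u}^p$ in general), I split
\begin{align}
V^\top D = \sum_{b} P_{-j,b}^{(v)}\Big[V^\top\varrho(A\xi^{(b)}+Bu_{-j})-V^\top\varrho(A\xi^{(b)}+Bv_{-j})\Big] + V^\top p(u_{-j})\,\delta .
\end{align}
The first sum is the signal produced by $u_{-j}\neq v_{-j}$: all arguments lie in the monotonicity window, $\sum_b P_{-j,b}^{(v)}=1$, and $V^\top\varrho$ is strictly monotonic, so up to an error that goes to $0$ uniformly as $A\to0$ this sum equals $V^\top\varrho(Bu_{-j})-V^\top\varrho(Bv_{-j})$, which is nonzero since $B\neq0$ forces $Bu_{-j}\neq Bv_{-j}$. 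The second term carries the mismatch propagated from the older disagreements; since $\delta$ lies in the sum-zero hyperplane and the rank-one $A\to0$ limit of $p(u_{-j})$ annihilates that hyperplane, this term is $O(A)$ and is dominated by the signal once $A$ is small enough. Hence $V^\top D\neq0$ for a suitable choice of $(A,B)$, giving the separating functional.

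The hard part is this final balance, together with the invariance lemma: one must verify that the difference injected at time $-j$ persists under $V^\top$ and is not washed out by the contributions of the possibly infinitely many earlier disagreements. The quantitative gap from strict monotonicity of $V^\top\varrho$ on the window, weighed against the $O(A)$ suppression of the tail, is what closes it, while the invertible-subspace hypothesis is exactly what lets one weight vector $W$ reach back through the intervening (agreeing) transitions to read off the monotone component at time $0$. One also has to ensure $(A,B)$ can be taken small enough for the window and tail estimates while $\mathcal{V}_{R_u}^p$ still contains the vector $V$; when $m=2$ this is automatic because $\mathcal{V}_{R_u}^p=\mathbb{R}^2$ as soon as $A\neq0$, which is why the $m=2$ corollary only needs monotonicity of $\varrho_1$.
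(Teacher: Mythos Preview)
Your overall architecture matches the paper's: reduce to a single-node ESN with scalar input, locate the most recent disagreement index $k_\delta$ (your $-j$), use the invertible-subspace hypothesis to pull the weight vector back through the agreeing transitions, and finally take $A$ small relative to $B(u_{k_\delta}-v_{k_\delta})$ to force separation. Where you diverge is in the endgame. The paper does \emph{not} decompose $D$ into ``signal plus tail''; instead it observes that for $R(\zeta)=V^\top\varrho(\zeta)$ one has
\[
V^\top p(u_{k_\delta})P=\sum_b R(A\xi^{(b)}+Bu_{k_\delta})\,P_b\in\Big[\min_b R(A\xi^{(b)}+Bu_{k_\delta}),\ \max_b R(A\xi^{(b)}+Bu_{k_\delta})\Big]
\]
for \emph{every} probability vector $P$, and then chooses $A<B\,(u_{k_\delta}-v_{k_\delta})/(\xi_{\max}-\xi_{\min})$ so that the two $\zeta$-intervals for $u_{k_\delta}$ and $v_{k_\delta}$ are disjoint; strict monotonicity of $R$ then makes the two value-intervals disjoint. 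This interval/convex-combination argument separates the two functionals \emph{regardless} of what the tail distributions $P_{k_\delta}^{(u)}$ and $P_{k_\delta}^{(v)}$ are, so there is nothing to estimate and no need to track differences of reachable distributions.

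Your signal-versus-$O(A)$-tail argument reaches the same conclusion and is correct in spirit, but the extra ``invariance lemma'' you introduce is a genuine soft spot. You assert that the span $S$ of differences of reachable probability vectors satisfies $p(u)S=S$ for every $u$, hence $S\subseteq\mathcal{V}_{R_u}^p$, so that $D\in\mathcal{V}_{R_u}^p$ and the restricted pullback $W^\top Q\eta=V^\top\eta$ for $\eta\in\mathcal{V}_{R_u}^p$ suffices. The inclusion $p(u)S\subseteq S$ is immediate, but the \emph{onto} direction is not: a difference $P_0^{(u)}-P_0^{(v)}$ factors as $p(u_{-1})P_{-1}^{(u)}-p(v_{-1})P_{-1}^{(v)}$ with possibly $u_{-1}\neq v_{-1}$, so it is not obviously in $p(w)S$ for any single $w$. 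The paper never needs this lemma because its interval argument works for arbitrary tail distributions; it instead asserts the stronger pullback $W^\top Q=V^\top$ as full linear functionals and applies it directly to the probability vectors $p(u_{k_\delta})P_{k_\delta}$. If you adopt the paper's convex-combination step in place of your decomposition, the invariance lemma becomes unnecessary and the argument closes cleanly.
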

\begin{proof}
    To prove that the class $\mathcal{G}_w^{\varrho}$ of stochastic ESNs separates points in $K_{R_u}$, we only need to show that for any pair of input sequences $(u_k)_{k\in\mathbb{Z}_-}$ and $(v_k)_{k\in\mathbb{Z}_-}\neq (u_k)_{k\in\mathbb{Z}_-}$ we can find at least one member $G_{A,B,W}^{\varrho}\left((u_k)_{k\in\mathbb{Z}_-}\right)$ of the class $\mathcal{G}_w^{\varrho}$ for which $G_{A,B,W}^{\varrho}\left((u_k)_{k\in\mathbb{Z}_-}\right)\neq G_{A,B,W}^{\varrho}\left((v_k)_{k\in\mathbb{Z}_-}\right)$. In what follows, it will turn out that there is always a single-node functional that separates any two points, so we will restrict our analysis to the case where $A$ and $B$ are both scalars. Consider a single-node stochastic ESN that generates a functional in $\mathcal{G}_w^{\varrho}$. The general probability of obtaining a given outcome $\xi^{(a)}$ is given by $P_{k,a}$. For a stochastic ESN, The evolution of these probabilities are given by 
    \begin{align}
        P_{k+1,a}=\sum_{b=1}^m\varrho_{a}(A \xi^{(b)} + B u_k)P_{k,b},
    \end{align}
    where $u_k, A,$ and $B$ are all scalars and there are $m$ possible outcomes. For vector inputs, $B$ would also become a vector, and the proof that follows can be applied by taking all but one of the components of $B$ to be zero. We write $P_k$ to refer to the vector whose components are the probabilities $P_{k,a}$. The functional is defined by $G_{A,B,W}^{\varrho}(u) = W^\top P_0$ for some $m$-dimensional vector $W$. 

    We are given two sequences $(u_k)_{k\in\mathbb{Z}_-}, (v_k)_{k\in\mathbb{Z}_-}\in K_{R_u}$ for which $(u_k)_{k\in\mathbb{Z}_-}\neq (v_k)_{k\in\mathbb{Z}_-}$, meaning that there is at least one $k\in\mathbb{Z}_-$ for which $u_k\neq v_k$. Let $k_\delta$ be the largest number in $\mathbb{Z}_-$ for which $u_{k_\delta}\neq v_{k_\delta}$, so that $u_k = v_k$ for all $k\in\{-1,-2,\dots,k_\delta+1\}$. Without loss of generality, we can take $u$ to be the sequence with the greater value of $u_{k_\delta}$, so that $u_{k_\delta}-v_{k_\delta}>0$. With $p_{ab}(u_k) = \varrho_{a}(A \xi^{(b)} + B u_k)$, we see that 
    \begin{align}
    G_{A,B,W}^{\varrho}\left((u_k)_{k\in\mathbb{Z}_-}\right) &= W^\top \left(\prod_{\kappa=1}^{-k_\delta-1}p(u_{-\kappa})\right)p(u_{k_\delta})P_{k_\delta}\left((u_k)_{k\in\mathbb{Z}_-}\right) \\
    &= \widetilde{W}^\top p(u_{k_\delta})P_{k_\delta}\left((u_k)_{k\in\mathbb{Z}_-}\right),    
    \end{align}
    where $\widetilde{W}^\top = W^\top \left(\prod_{\kappa=1}^{-k_\delta-1}p(u_{-\kappa})\right)$. In this expression we write the dependence of $P_{k_\delta}$ on the the input sequence $(u_k)_{k\in\mathbb{Z}_-}$ explicitly as $P_{k_\delta}\left((u_k)_{k\in\mathbb{Z}_-}\right)$, which is defined by
    \begin{align}
        P_{k_\delta}\left((u_k)_{k\in\mathbb{Z}_-}\right) = \mathop{\lim}_{ l \rightarrow \infty}  \left(\prod_{k=1}^{l} p(u_{k_\delta - k})\right) P_{k_\delta-l}\left((u_k)_{k\in\mathbb{Z}_-}\right),
    \end{align}
    where the limit is guaranteed to converge because of the contraction property of $p(u_k)$, so the dependence of $P_{k_\delta}\left((u_k)_{k\in\mathbb{Z}_-}\right)$ on $P_{k_\delta-l}\left((u_k)_{k\in\mathbb{Z}_-}\right)$ vanishes as $l\rightarrow\infty$. Because of how we defined $k_\delta$, we also have that $G_{A,B,W}^{\varrho}\left((v_k)_{k\in\mathbb{Z}_-}\right) = \widetilde{W}^\top p(v_{k_\delta})P_{k_\delta}\left((v_k)_{k\in\mathbb{Z}_-}\right)$ using the same vector $\widetilde{W}$. Thus by defining 
    \begin{align}
        \widetilde{G}_{A,B,W}^{\varrho}(u_k, P) \equiv W^\top p(u_k)P
    \end{align}
    as a function of the vector $W$, number $u_k$, and probability distribution $P$, we see that the filters can be written as 
    \begin{align}
    \label{eq:Gtilde}
        G_{A,B,W}^{\varrho}\left((u_k)_{k\in\mathbb{Z}_-}\right) = \widetilde{G}_{A,B,\widetilde{W}}^{\varrho}\left(u_{k_\delta}, P_{k_\delta}\left((u_k)_{k\in\mathbb{Z}_-}\right)\right),
    \end{align} with a similar expression for $v$.
    
    In general the controlled transition matrix $p$ may not be invertible, and so the resulting weight vector $\widetilde{W}$ may not be completely arbitrary. However, if we assume that $\widetilde{W}\in \mathcal{V}_{R_u}^p$ as defined in Thm. \ref{thm:separation}, then we can always find another vector $V\in\mathcal{V}_{R_u}^p$ such that $V^\top p(u_k) = \widetilde{W}^\top$ for any $u_k\in[-R_u,R_u]$. Since $V\in\mathcal{V}_{R_u}^p$ as well, we can repeat this process as many times as necessary to find the vector $W\in \mathcal{V}_{R_u}^p$ that satisfies $W^\top\left(\prod_{\kappa=1}^{-k_\delta-1}p(u_{-\kappa})\right) = \widetilde{W}^\top$ for any product $\prod_{\kappa=1}^{-k_\delta-1}p(u_{-\kappa})$. Thus for any sequence $u$ we can always find a vector $W$ that will generate an arbitrary vector $\widetilde{W}\in\mathcal{V}_{R_u}^p$.

    The dependence on the rest of the sequence through $P_{k_\delta}\left((u_k)_{k\in\mathbb{Z}_-}\right)$ in the expression for $G_{A,B,W}^{\varrho}\left((u_k)_{k\in\mathbb{Z}_-}\right)$ is very difficult to analyze, but we can take advantage of the fact that it is a probability distribution to eliminate the dependence on the rest of the sequence. Because of Eq. \eqref{eq:Gtilde}, it will suffice to show that $G_{A,B,W}^{\varrho}\left((u_k)_{k\in\mathbb{Z}_-}\right)\neq G_{A,B,W}^{\varrho}\left((v_k)_{k\in\mathbb{Z}_-}\right)$ by demonstrating that there is always at least one $\widetilde{W}, A$, and $B$ such that $\widetilde{G}_{A,B,\widetilde{W}}^{\varrho}(u_{k_\delta}, P_1) \neq \widetilde{G}_{A,B,\widetilde{W}}^{\varrho}(v_{k_\delta}, P_2)$ for all distributions $P_1$ and $P_2$. Taking $R(\zeta)=\widetilde{W}^\top \varrho(\zeta)$ for any scalar $\zeta\in[-R_\zeta,R_\zeta]$, we have 
    \begin{align}
        \widetilde{G}_{A,B,\widetilde{W}}^{\varrho}(u_k, P) = \widetilde{W}^\top p(u_k)P = \sum_{b=1}^m R(A\xi^{(b)} + B u_k)P_b.
    \end{align}
    Since $R(\zeta)$ is a scalar function, taking an average over a set of values $\mathcal{Z}=\{\zeta_1,\dots,\zeta_m\}$ for the function will give a value between the minimum and maximum values of $R(\zeta)$ for $\zeta\in\mathcal{Z}$. Thus we have that 
    \begin{align}
    \label{eq:interval}
        \widetilde{G}_{A,B,\widetilde{W}}^{\varrho}(u_k, P)\in[\min_\xi R(A \xi + B u_k), \max_\xi R(A\xi + B u_k)],
    \end{align}
    where $\min_\xi f(\xi) = \min\{f(\xi^{(1)}), \dots, f(\xi^{(m)})\}$ and likewise for $\max_\xi f(\xi)$. Thus we need to show that the above intervals corresponding to $u_{k_\delta}$ and $v_{k_\delta}$ do not overlap for some choice of $A, B,$ and $\widetilde{W}$. 

    Let us assume that there is some $\widetilde{W}\in\mathcal{V}_{R_u}^p$ such that $R(\zeta)$ is strictly monotonic in an interval $[-R_{\mathrm{mono}},R_{\mathrm{mono}}]$ for some $R_{\mathrm{mono}}>0$. Choose $A>0$ and $B>0$ such that $\zeta_{\mathrm{max}}=A|\xi|_{\mathrm{max}}+B R_u\leq R_{\mathrm{mono}}$, where $|\xi|_{\mathrm{max}} = \max\{|\xi^{(1)}|,\dots,|\xi^{(m)}|\}$. Because $R(\zeta)$ is strictly monotonic on this interval, it is also bijective, so any interval of the form in Eq. \eqref{eq:interval} corresponds to an interval $[A\xi_\mathrm{min} + B u_k, A\xi_\mathrm{max} + B u_k]\subset [-R_{\mathrm{mono}}, R_{\mathrm{mono}}]$. Since we took $u_{k_\delta} > v_{k_\delta}$, we only need to show that
    \begin{align}
        A\xi_\mathrm{min} + B u_{k_\delta} > A\xi_\mathrm{max} + B v_{k_\delta} \quad \implies \quad u_{k_\delta} - v_{k_\delta} > \frac{A}{B}(\xi_\mathrm{max} - \xi_\mathrm{min}),
    \end{align} which can always be achieved by choosing $A < B\frac{u_{k_\delta} - v_{k_\delta}}{\xi_\mathrm{max} - \xi_\mathrm{min}}$. Therefore, under the above monotonicity assumption, since we have only placed finite upper bounds on the positive numbers $A$ and $B$, and because $\widetilde{W}\in\mathcal{V}_{R_u}^p$, for any pair of sequences $(u_k)_{k\in\mathbb{Z}_-}, (v_k)_{k\in\mathbb{Z}_-}\in K_{R_u}$ we can always find choices of $A,B,$ and $W$ that result in a functional $G_{A,B,W}^{\varrho}\left((u_k)_{k\in\mathbb{Z}_-}\right)$ that separate $(u_k)_{k\in\mathbb{Z}_-}$ and $(v_k)_{k\in\mathbb{Z}_-}$. This means that the class of stochastic ESNs $\mathcal{G}_w^{\varrho}$ contains members that can distinguish between any two input sequences $(u_k)_{k\in\mathbb{Z}_-},(v_k)_{k\in\mathbb{Z}_-}\in K_{R_u}$ and therefore separates points in $K_{R_u}$.
\end{proof}

The case where there are $m=2$ outcomes only is the simplest case, which also makes it the most well-studied case in terms of potential hardware and the most practical case to physically implement. In particular, any qubit-based quantum computing architecture can be used to realize this class of stochastic ESNs. The conditions for separability also simplify in this case, so we find it useful to state the $m=2$ version of Theorem \ref{thm:separation} as a corollary.
\begin{corollary}
\label{thm:separation2}
    The class of stochastic ESNs $\mathcal{G}_w^{\varrho}$ defined by a controlled probability distribution $\varrho(\zeta) = \left(\begin{matrix} \varrho_1(\zeta) & (1-\varrho_1(\zeta)) \end{matrix}\right)^\top$ defined on the interval $\zeta\in[-R_\zeta,R_\zeta]$ for some $R_\zeta>0$ has the separation property if for some $R_{\mathrm{mono}}>0$ we have that $\varrho_1(\zeta)$ is strictly monotonic on the interval $[-R_{\mathrm{mono}},R_{\mathrm{mono}}]$.
\end{corollary}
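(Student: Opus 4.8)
The plan is to obtain Corollary~\ref{thm:separation2} as a direct specialization of Theorem~\ref{thm:separation}. That theorem already delivers separation as soon as we exhibit a vector $V\in\mathcal{V}_{R_u}^p$ for which $V^\top\varrho(\zeta)$ is strictly monotonic on $[-R_{\mathrm{mono}},R_{\mathrm{mono}}]$, so the entire job is to produce such a $V$ in the $m=2$ case. The obvious candidate is $V=(1,\,0)^\top$: then $V^\top\varrho(\zeta)=\varrho_1(\zeta)$, which is strictly monotonic on $[-R_{\mathrm{mono}},R_{\mathrm{mono}}]$ by hypothesis. The only thing left is to check that $(1,\,0)^\top$ actually lies in the invertible subspace, and in fact I would prove the stronger statement $\mathcal{V}_{R_u}^p=\mathbb{R}^2$, which is precisely why the invertible-subspace requirement in Criterion~3 is ``trivially satisfied'' when each node has two outcomes.

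First I would write the transition matrix down explicitly. With two distinct outcomes $\xi^{(1)}\neq\xi^{(2)}$, scalars $A,B$, and $z_b=A\xi^{(b)}+Bu$, the matrix $p(u)$ has columns $(\varrho_1(z_b),\,1-\varrho_1(z_b))^\top$ for $b=1,2$, so a one-line computation yields $\det p(u)=\varrho_1(z_1)-\varrho_1(z_2)$. Next I would recall that the proof of Theorem~\ref{thm:separation} only ever invokes parameters $A,B>0$ constrained (among other things) by $A|\xi|_{\mathrm{max}}+BR_u\leq R_{\mathrm{mono}}$; for any such pair and any $u\in[-R_u,R_u]$ both $z_1$ and $z_2$ lie in $[-R_{\mathrm{mono}},R_{\mathrm{mono}}]$, where $\varrho_1$ is strictly monotonic, hence injective. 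Since $A>0$ and $\xi^{(1)}\neq\xi^{(2)}$ force $z_1\neq z_2$, the determinant $\varrho_1(z_1)-\varrho_1(z_2)$ is nonzero, so $p(u)$ is invertible for every admissible $u$; consequently $p(u)\mathbb{R}^2=\mathbb{R}^2$ for all such $u$, and the largest subspace fixed by every $p(u)$ is all of $\mathbb{R}^2$, i.e. $\mathcal{V}_{R_u}^p=\mathbb{R}^2$.

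With $\mathcal{V}_{R_u}^p=\mathbb{R}^2$ established, $V=(1,\,0)^\top$ trivially belongs to the invertible subspace and $V^\top\varrho(\zeta)=\varrho_1(\zeta)$ is strictly monotonic on $[-R_{\mathrm{mono}},R_{\mathrm{mono}}]$, so every hypothesis of Theorem~\ref{thm:separation} holds and the separation of $\mathcal{G}_w^{\varrho}$ on $K_{R_u}$ follows immediately.

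The main obstacle is not computational but a matter of bookkeeping: $\mathcal{V}_{R_u}^p$ is defined through $p(u)$, which depends on $A$ and $B$, while $A$ and $B$ are only pinned down during the proof of Theorem~\ref{thm:separation}. The point I would make explicit is that invertibility of $p(u)$ — and therefore the identity $\mathcal{V}_{R_u}^p=\mathbb{R}^2$ — holds \emph{uniformly} over the whole admissible range of $(A,B)$ used in that proof, so the witness $V=(1,\,0)^\top$ is available no matter which pair the separation argument ends up choosing. Everything else reduces to the elementary $2\times 2$ determinant identity above.
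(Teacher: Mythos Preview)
Your proposal is correct and follows essentially the same route as the paper: compute $\det p(u)=\varrho_1(z_1)-\varrho_1(z_2)$, use strict monotonicity of $\varrho_1$ on $[-R_{\mathrm{mono}},R_{\mathrm{mono}}]$ together with $A>0$, $\xi^{(1)}\neq\xi^{(2)}$ to make the determinant nonzero for all admissible $u$, conclude $\mathcal{V}_{R_u}^p=\mathbb{R}^2$, and then take $V=(1,0)^\top$ to invoke Theorem~\ref{thm:separation}. Your explicit remark about the $(A,B)$-dependence of $\mathcal{V}_{R_u}^p$ is a worthwhile clarification that the paper leaves implicit.
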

\begin{proof}
    The controlled transition matrix $p(u)$ for this system has the form
    \begin{align}
        p(\zeta_1,\zeta_2) = \left(\begin{matrix}
            \varrho_1(\zeta_1) & \varrho_1(\zeta_2) \\
            1-\varrho_1(\zeta_1) & 1-\varrho_1(\zeta_2)
        \end{matrix}\right)
    \end{align}
    The determinant of this matrix is given by
    \begin{align}
        \det(p(\zeta_1,\zeta_2)) = \varrho_1(\zeta_1)(1-\varrho_1(\zeta_2)) - \varrho_1(\zeta_2)(1-\varrho_1(\zeta_1)) = \varrho_1(\zeta_1) - \varrho_1(\zeta_2)
    \end{align}
    With $\zeta_1, \zeta_2\in[-R_{\mathrm{mono}}, R_{\mathrm{mono}}]$ for the value of $R_{\mathrm{mono}}>0$ for which $\varrho_1(\zeta)$ is assumed to be strictly monotonic, the only way this determinant can be 0 is if $\zeta_1=\zeta_2$. Thus with $\zeta_{1,2}=A \xi^{(1,2)} +B u_k$ and $A>0$ for distinct measurement outcomes $\xi^{(1)}$ and $\xi^{(2)}$, there is no value of $u$ for which $\zeta_1=\zeta_2$, so as long as we choose $A$ and $B$ such that $\zeta_1, \zeta_2\in[-R_{\mathrm{mono}}, R_{\mathrm{mono}}]$ for all $u\in[-R_u,R_u]$, we can guarantee that $p(u)$ is invertible for all $u\in[-R_u,R_u]$. This means that the invertible subspace $\mathcal{V}_{R_u}^p$ is simply $\mathbb{R}^2$. So by choosing $V=\left(\begin{matrix} 1 & 0 \end{matrix}\right)^\top\in\mathbb{R}^2$, we have that $V^\top\varrho(\zeta)=\varrho_1(\zeta)$ is strictly monotonic on the interval $[-R_{\mathrm{mono}}, R_{\mathrm{mono}}]$, satisfying the separability criterion established in Thm. \ref{thm:separation}.
\end{proof}

\subsection*{Proof of Theorem \ref{thm:universal} }
With polynomial algebra proven and sufficient criteria for separability given for classes of stochastic ESNs, we are now finally ready to apply the Stone-Weierstrass Theorem \cite[Theorem 7.3.1]{Dieudonne:2011} and prove the universality of these classes.
\begin{proof}[Proof of Theorem \ref{thm:universal}]
    The conditions that the distribution $\varrho(\zeta)$ is continuous in $\zeta$ and satisfies $\varrho^\top(\zeta_1)\varrho(\zeta_2)>0$ for all $\zeta_1, \zeta_2\in[-R_\zeta, R_\zeta]$ ensures that the conditional probabilities for the ESNs are contracting by Theorem \ref{thm:contract}, so that the ESNs have the uniform convergence and fading memory properties by Theorem \ref{thm:UCFM}. To elaborate, the controlled composite probabilities of a general stochastic ESN with dimension $L$ are given in Eq. \eqref{eq:ESNprob} and have the form $p_a(z) = \prod_{i=1}^L\varrho_{a_i}(z_i)$. The continuity of $p_a(z)$ follows directly from the continuity of $\varrho(\zeta)$, and the condition for contraction is satisfied if $\sum_{a=1}^{m^L} p_a(z_1)p_a(z_2)>0$ for all $L$-dimensional vectors $z_1, z_2\in[-R_\zeta, R_\zeta]^L$. The sum over $a$ can be decomposed into $L$ sums over the $a_i$'s, so we have $\sum_a p_a(z_1)p_a(z_2)=\prod_{i=1}^L\left(\sum_{a_i=1}^m\varrho_{a_i}(z_{1,i})\varrho_{a_i}(z_{2,i})\right)=\prod_{i=1}^L\left(\varrho^\top(z_{1,i})\varrho(z_{2,i})\right)$. The positivity of this product immediately follows from the condition that $\varrho^\top(\zeta_1)\varrho(\zeta_2)>0$ for all scalars $\zeta_1, \zeta_2\in[-R_\zeta, R_\zeta]$, so every stochastic ESN that uses the probability distribution $\varrho$ has uniform convergence and fading memory. This is important because by Thm. \ref{thm:UCFM} it guarantees that each ESN corresponds to a unique, causal, and time-invariant filter $U_\varrho$ as well as a unique functional $G_{A,B,W}^{\varrho}\left((u_k)_{k\in\mathbb{Z}_-}\right)\in\mathcal{G}_w^{\varrho}$ on the space of uniformly bounded sequences $K_{R_u}$. Also, we can ensure that any $z_k=Ax^{(a)}+B u_k$ is in the interval $[-R_\zeta, R_\zeta]^L$ for any outcome $x^{(a)}$ and any input $u_k\in[-R_u,R_u]$ with appropriate choices of the matrix $A$ and vector $B$.

    To apply the Stone-Weierstrass Theorem on the class of functionals $\mathcal{G}_w^{\varrho}$, it must form a polynomial algebra, contain the constant functionals, and separate points in $K_{R_u}$. We established in a previous section that any $\mathcal{G}_w^{\varrho}$ forms a polynomial algebra with no additional conditions. It is also easy to show that by taking $W$ such that $W_a=C$ for all indices $a$, the functional $G_{A,B,W}^{\varrho}\left((u_k)_{k\in\mathbb{Z}_-}\right)=W^\top P_0=\sum_aC P_{0,a}=C$, so any class $\mathcal{G}_w^{\varrho}$ does indeed contain the constant functionals. Finally, by Theorem \ref{thm:separation} the additional condition that for some $R_{\mathrm{mono}}>0$ there exists a vector $V\in\mathcal{V}_{R_u}^p$ for which $V^\top\varrho(\zeta)$ is strictly monotonic on the interval $[-R_{\mathrm{mono}}, R_{\mathrm{mono}}]$ guarantees that $\mathcal{G}_w^{\varrho}$ separates points in $K_{R_u}$. These three properties allow us to apply the Stone-Weierstrass theorem to show that the class $\mathcal{G}_w^{\varrho}$ is dense on the space of functionals over $(K_{R_u}, ||\cdot||_w)$, and so one can always find a functional $G_{A,B,W}^{\varrho}\left((u_k)_{k\in\mathbb{Z}_-}\right)\in\mathcal{G}_w^{\varrho}$ that approximates some arbitrary functional $G^*\left((u_k)_{k\in\mathbb{Z}_-}\right)$ on $K_{R_u}$ to arbitrary precision, making the class $\mathcal{G}_w^{\varrho}$ of stochastic ESNs a universal approximating class.
\end{proof}

As noted in the previous section, it is particularly useful to consider the case when $m=2$. Here we state Theorem \ref{thm:universal} given this specific case as another corollary, which results in some simplification of the conditions necessary for universality.
\begin{corollary}
\label{thm:universal2}
    With the set of uniformly bounded sequences $K_{R_u} \subset (\mathbb{R}^n)^{\mathbb{Z}_-}$ and a weighted metric $||\cdot||_w$ defined in Thm. \ref{thm:UCFM}, let $\mathcal{G}_w^{\varrho}$ be the class of functionals generated by stochastic ESNs defined by a controlled probability distribution $\varrho(\zeta) = \left(\begin{matrix} \varrho_1(\zeta) & (1-\varrho_1(\zeta)) \end{matrix}\right)^\top$ that is defined on the interval $\zeta\in[-R_\zeta, R_\zeta]$, where $\varrho_1(\zeta)$ is continuous in $\zeta$ and either $\varrho_1(\zeta)\neq 0$ or $\varrho_1(\zeta)\neq 1$ for all $\zeta\in[-R_\zeta, R_\zeta]$. If for some $R_{\mathrm{mono}}>0$ we have that $\varrho_1(\zeta)$ is strictly monotonic on the interval $[-R_{\mathrm{mono}}, R_{\mathrm{mono}}]$, then the class $\mathcal{G}_w^{\varrho}$ is dense in the space of functionals over the compact metric space $(K_{R_u}, ||\cdot||_w)$.
\end{corollary}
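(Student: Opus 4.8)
The plan is to derive Corollary \ref{thm:universal2} as an immediate specialization of Theorem \ref{thm:universal}: it suffices to check that the three simplified hypotheses on the scalar function $\varrho_1$ imply the three hypotheses of Theorem \ref{thm:universal} for the two-component vector $\varrho(\zeta) = \left(\begin{matrix}\varrho_1(\zeta) & 1-\varrho_1(\zeta)\end{matrix}\right)^\top$. The first hypothesis transfers trivially: $\varrho$ is continuous in $\zeta$ exactly when $\varrho_1$ is, since the second component is an affine function of $\varrho_1$.

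Second, I would verify the positivity condition $\varrho^\top(\zeta_1)\varrho(\zeta_2) > 0$. Expanding gives $\varrho^\top(\zeta_1)\varrho(\zeta_2) = \varrho_1(\zeta_1)\varrho_1(\zeta_2) + (1-\varrho_1(\zeta_1))(1-\varrho_1(\zeta_2))$, a sum of two non-negative terms. This sum vanishes only if both terms vanish, which forces $\varrho_1$ to attain the value $0$ at one of $\zeta_1,\zeta_2$ and the value $1$ at the other. Hence requiring that $\varrho_1$ never equals $0$ on $[-R_\zeta, R_\zeta]$, or never equals $1$ on that interval, is precisely what guarantees $\varrho^\top(\zeta_1)\varrho(\zeta_2) > 0$ for all $\zeta_1,\zeta_2$; this is both necessary and sufficient, so nothing is lost in the reduction.

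Third — the only step with content beyond bookkeeping — I would establish Criterion 3, namely the existence of $V\in\mathcal{V}_{R_u}^p$ with $V^\top\varrho(\zeta)$ strictly monotonic on $[-R_{\mathrm{mono}},R_{\mathrm{mono}}]$. This is exactly the computation in the proof of Corollary \ref{thm:separation2}: for a single-node ESN with scalar $A>0$ and $B$ chosen so that $A\xi^{(b)}+Bu$ lies in $[-R_{\mathrm{mono}},R_{\mathrm{mono}}]$ for both outcomes $\xi^{(1)},\xi^{(2)}$ and all $u\in[-R_u,R_u]$, the transition matrix has $\det p(\zeta_1,\zeta_2) = \varrho_1(\zeta_1)-\varrho_1(\zeta_2)$, which is nonzero because $\varrho_1$ is injective on the monotonicity interval and $\zeta_1\neq\zeta_2$ whenever $A\neq 0$. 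Thus $p(u)$ is invertible for all admissible $u$, the invertible subspace is all of $\mathbb{R}^2$, and I can take $V=\left(\begin{matrix}1 & 0\end{matrix}\right)^\top$, for which $V^\top\varrho(\zeta)=\varrho_1(\zeta)$ is strictly monotonic by hypothesis.

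Having verified all three hypotheses of Theorem \ref{thm:universal}, the conclusion — density of $\mathcal{G}_w^{\varrho}$ in the space of functionals over $(K_{R_u},\|\cdot\|_w)$ — follows at once from that theorem. I do not expect a genuine obstacle; the only point needing care is parsing and checking the disjunction in the second step, together with confirming that the range restriction on $A$ and $B$ used to land inside $[-R_{\mathrm{mono}},R_{\mathrm{mono}}]$ is compatible with keeping the ESN contracting, which it is since $[-R_{\mathrm{mono}},R_{\mathrm{mono}}]\subseteq[-R_\zeta,R_\zeta]$.
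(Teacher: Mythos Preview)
Your proposal is correct and follows essentially the same route as the paper: both reduce the corollary to Theorem \ref{thm:universal} by checking that the simplified hypotheses on $\varrho_1$ imply the three general hypotheses, with the contraction condition handled by the observation that $\varrho^\top(\zeta_1)\varrho(\zeta_2)=0$ forces $\varrho_1$ to hit both $0$ and $1$, and the separation/monotonicity condition handled by the determinant computation from Corollary \ref{thm:separation2} showing $\mathcal{V}_{R_u}^p=\mathbb{R}^2$. Your write-up is in fact slightly more explicit than the paper's, which simply points to Corollary \ref{thm:separation2} and the contraction simplification without spelling out the expansion of $\varrho^\top(\zeta_1)\varrho(\zeta_2)$.
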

\begin{proof}
    This is proven using exactly the same reasoning as in the proof of Theorem \ref{thm:universal}, but with the simpler condition for separation of points in Corollary \ref{thm:separation2} when the probability distribution $\varrho$ has only two components. Additionally, the contraction constraint that $\varrho^\top(\zeta_1)\varrho(\zeta_2)>0$ for all $\zeta_1, \zeta_2\in[-R_\zeta, R_\zeta]$ can be simplified by observing that the only way that this product can be zero for 2-dimensional probability distributions is if one of the vectors is $\left(\begin{matrix} 1 & 0 \end{matrix}\right)^\top$ and the other is $\left(\begin{matrix} 0 & 1 \end{matrix}\right)^\top$. Thus we can ensure that $\varrho^\top(\zeta_1)\varrho(\zeta_2)>0$ for all $\zeta_1, \zeta_2\in[-R_\zeta, R_\zeta]$ by requiring that $\varrho_1(\zeta)$ is either never equal to $0$ or never equal to $1$ on $[-R_\zeta, R_\zeta]$.
\end{proof}

Finally, we conclude this section by stating the universality of the class of all stochastic RCs as defined in Section \ref{sec:SRC}.
\begin{corollary}
    The class $\mathcal{G}_w$ of all stochastic RCs with a contracting controlled transition matrix $p(u)$ is universal.
\end{corollary}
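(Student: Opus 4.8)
The plan is to obtain this statement essentially for free from Theorem~\ref{thm:universal} by exhibiting a dense subclass inside $\mathcal{G}_w$. First I would fix one concrete controlled probability distribution $\varrho$ that is known to satisfy the hypotheses of Theorem~\ref{thm:universal}, the simplest choice being the $m=2$ activation of Corollary~\ref{thm:universal2}: for instance $\varrho_1(\zeta)=\sin^2(\zeta+\pi/4)$ on an interval $[-R_\zeta,R_\zeta]$ with $R_\zeta<\pi/4$, on which $\varrho_1$ is continuous, stays strictly inside $(0,1)$, and is strictly monotonic. By Corollary~\ref{thm:universal2} (equivalently Theorem~\ref{thm:universal}), the class $\mathcal{G}_w^{\varrho}$ of functionals generated by stochastic ESNs with this activation is dense in the space of continuous functionals on the compact metric space $(K_{R_u},||\cdot||_w)$.

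Second I would record the inclusion $\mathcal{G}_w^{\varrho}\subseteq\mathcal{G}_w$. Every stochastic ESN is a stochastic RC: Eqs.~\eqref{eq:stocESN}--\eqref{eq:ESNprob} are the special case of the general stochastic-RC recursion \eqref{eq:probres} in which $p(u)$ factorizes over nodes. Moreover, the proof of Theorem~\ref{thm:universal} already shows that the chosen $\varrho$ forces the composite controlled transition matrix $p(u)$ of every such ESN to satisfy the hypothesis of Theorem~\ref{thm:contract} (namely that $p^\top(u)p(u)$ has no zero entries), hence to be contracting. Therefore each element of $\mathcal{G}_w^{\varrho}$ is the functional of a stochastic RC with a contracting controlled transition matrix, i.e. an element of $\mathcal{G}_w$. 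Conversely, by the lemma obtained from Theorems~\ref{thm:UCFM} and~\ref{thm:contract}, every contracting stochastic RC produces a unique causal, time-invariant filter with the fading-memory property, so its functional is continuous on $(K_{R_u},||\cdot||_w)$; thus $\mathcal{G}_w$ lives in the same ambient function space in which $\mathcal{G}_w^{\varrho}$ is dense. Since a superset of a dense set is dense, $\mathcal{G}_w$ is dense in the continuous functionals on $(K_{R_u},||\cdot||_w)$, which is precisely the asserted universality.

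The only delicate point, and the one I would flag as the main obstacle, is the handling of the weighting sequence. Theorem~\ref{thm:UCFM} produces, for each reservoir, some admissible null sequence $w$, whereas the density statement in Theorem~\ref{thm:universal} is phrased for a fixed $w$; to combine a whole class of reservoirs one must either read $\mathcal{G}_w$ as the class attached to a single fixed $w$ (the natural reading given the notation) or invoke the standard fact that the fading-memory property — and hence continuity of the filter functional — is insensitive to the particular choice of null sequence among comparable ones, so that all the relevant functionals can be viewed inside one common weighted metric space. Once this bookkeeping is settled the argument is purely set-theoretic and requires no new estimates beyond those already established for Theorems~\ref{thm:UCFM}, \ref{thm:contract}, \ref{thm:separation} and the Stone--Weierstrass step in Theorem~\ref{thm:universal}.
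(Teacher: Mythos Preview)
Your proposal is correct and follows essentially the same approach as the paper: exhibit a universal subclass $\mathcal{G}_w^{\varrho}\subseteq\mathcal{G}_w$ (the paper leaves $\varrho$ generic rather than fixing the $\sin^2$ example, but the logic is identical) and conclude that the superset inherits density. Your flagged bookkeeping point about the weighting sequence $w$ is not addressed in the paper's proof either, so your argument is at least as complete.
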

\begin{proof}
    Because every stochastic RC in $\mathcal{G}_w$ has contracting transition matrix $p(u)$ it will have an associated functional in $\mathcal{G}_w$ that is unique by Thm. \ref{thm:UCFM}, so the class is well-defined in terms of the individual stochastic RCs. Any class $\mathcal{G}_w^{\varrho}$ of stochastic ESNs that satisfies the conditions for universality in Theorem \ref{thm:universal} is contained in $\mathcal{G}_w$, so we can always find a member $G_{A,B,W}^{\varrho}\left((u_k)_{k\in\mathbb{Z}_-}\right)\in\mathcal{G}_w^{\varrho}\subset\mathcal{G}_w$ that will approximate any causal, time-invariant filter to an arbitrary error tolerance. The fact that $G_{A,B,W}^{\varrho}\left((u_k)_{k\in\mathbb{Z}_-}\right)\in\mathcal{G}_w$ as well for all $G_{A,B,W}^{\varrho}\left((u_k)_{k\in\mathbb{Z}_-}\right)$ shows that $\mathcal{G}_w$ is also a universal approximating class, so in this way the universality of $\mathcal{G}_w$ is inherited from the universality of $\mathcal{G}_w^{\varrho}$.
\end{proof}

\section*{Data and Code Availability}
The data generated for the figures in this article as well as the Jupyter notebooks written in Python used to make them are available in a figshare repository \href{https://doi.org/10.6084/m9.figshare.27957141}{https://doi.org/10.6084/m9.figshare.27957141}.

\begin{appendix}
\setcounter{section}{18}

\section{Supplementary Information}

\subsection{Universality of Proposed Designs}
\label{sec:apnduniversal}

We can show that qubit reservoir networks are universal using Corollary 2. If we choose $\delta=\frac{\pi}{4}$ and restrict ourselves to choices of matrices $A$ and vectors $B$ of dimension $L$ for which  $z_k = A x_k + B u_k$ are in an interval $[-R_\zeta,R_\zeta]^L$ at every time step $k$ where $R_\zeta<\frac{\pi}{4}$, then we guarantee that $0<\sin^2(z_{k,i}+\frac{\pi}{4})<1$ for any $k$ and $i$ since $0<z_{k,i}+\frac{\pi}{4}<\frac{\pi}{2}$. Also, we note that $\sin^2(\zeta+\frac{\pi}{4})$ is strictly increasing for all $\zeta\in[-R_\zeta,R_\zeta]$ if $R_\zeta<\frac{\pi}{4}$. Thus the controlled probability $\varrho_1(\zeta)$ given in Eq. (19) for qubit reservoir networks with $\delta=\frac{\pi}{4}$ meets the requirements for a universal approximating class as long the linearly transformed outcomes $z_k=A x_k + B u_k$ have components that are always in the interval $[-R_\zeta,R_\zeta]^L$ for some $R_\zeta<\frac{\pi}{4}$.

We can also show that our stochastic optical networks form a universal approximating class. Comparing to the conditions for universality in Corollary 2, the controlled probability given in Eq. (20) satisfies that $\varrho_1(\zeta) \neq 1$ for any finite $\zeta$. However, because it is an even function of $\zeta$, it is only strictly monotonic on the intervals $(-\infty,0)$ and $(0,\infty)$. Similar to the previous example, this means we have to choose an offset $d>0$ and make sure that $A$ and $B$ are chosen so that every component of $z_k$ is in some closed interval $[-R_\zeta, R_\zeta]$ where $R_\zeta<d$. Thus by choosing some $d>0$ and $0<R_\zeta<d$ so that at every time step $k$ and vector component $i$ we have $z_{k,i}\in[-R_\zeta,R_\zeta]$, we obtain a universal approximating class using this stochastic ESN design.

\subsection{Noise Analysis}
\label{sec:apndnoise}

For a realistic implementation of a stochastic RC, we will need to get an estimate for the probabilities $P_{k,a}$ of obtaining an outcome labeled by $a$ at time step $k$ by averaging over many runs of the RC. Our procedure for doing this is as follows. Each run evolves the readouts $x_k$ according to Eq. (1), without any averaging or error mitigation. After running the RC for $N_{\mathrm{runs}}$ times, we then count the number of occurrences $n_k^{(a)}$ of the outcome $x_k = x^{(a)}$ at time step $k$ across all of the runs. For a stochastic reservoir with $M$ outcomes, we then estimate 
\begin{align}
    P_{k,a}\approx \frac{n_k^{(a)}+1}{N_{\mathrm{runs}}+M}.
\end{align}
The additional factors of $+1$ and $+M$ come from a Bayesian analysis of the problem assuming no prior information about the distribution, where the above expression for $P_{k,a}$ represents the expected value of $P_{k,a}$ given $N_{\mathrm{runs}}$ prior measurements of $x_k$ where $n_k^{(a)}$ of them resulted in $x^{(a)}$. This result can be found in general texts on Bayesian analysis such as Ref. \cite{Heckerman:2008}. The main benefit to using this estimator is that the estimates will never be exactly equal to 0, making it possible to divide by components of the probability vector without having to check for a null probability.

\begin{figure}
    \centering
    \includegraphics[width=0.45\linewidth]{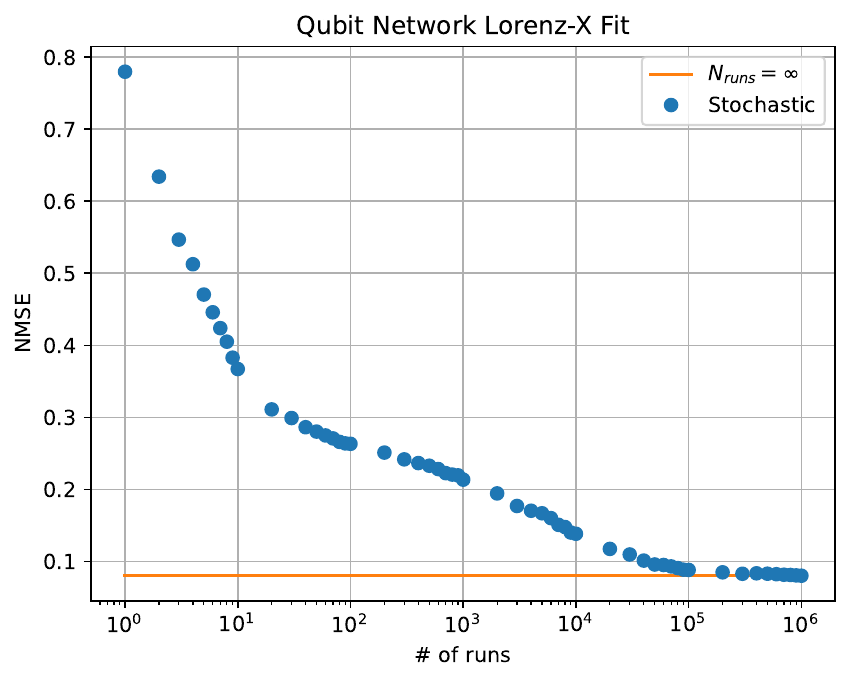}
    \includegraphics[width=0.45\linewidth]{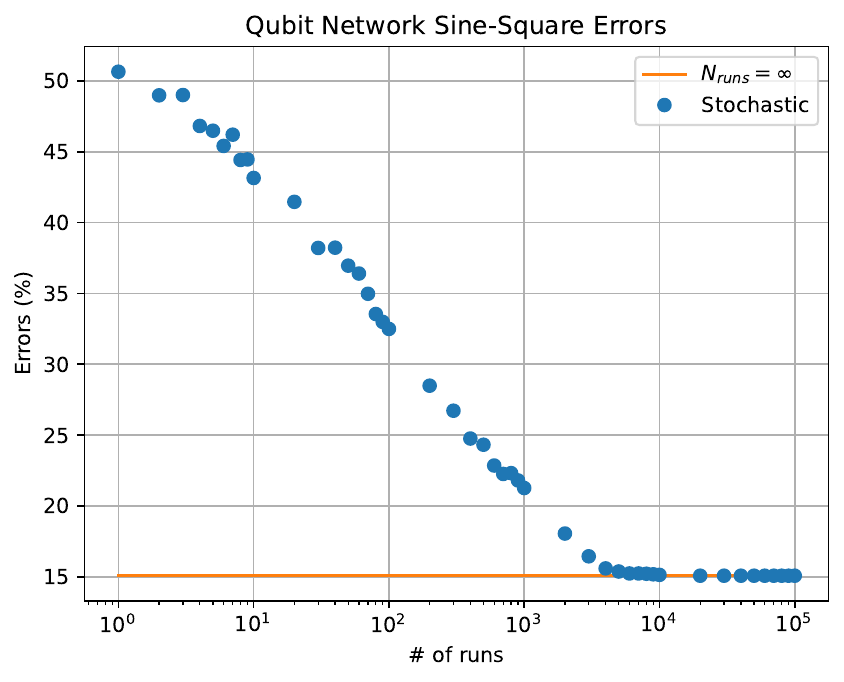}
    \caption{The left plot shows NMSE values for the Lorenz $X$ task, while the right plot shows the error percentage for the Sine-Square wave identification task, both as a function of the number of runs for the qubit reservoir network. Each data point represents the relevant error measure of the stochastic ESN with fixed $A$ and $B$ using 2 detectors.}
    \label{fig:SineSquareRuns}
\end{figure}

\begin{figure}
    \centering
    \includegraphics[width=0.45\linewidth]{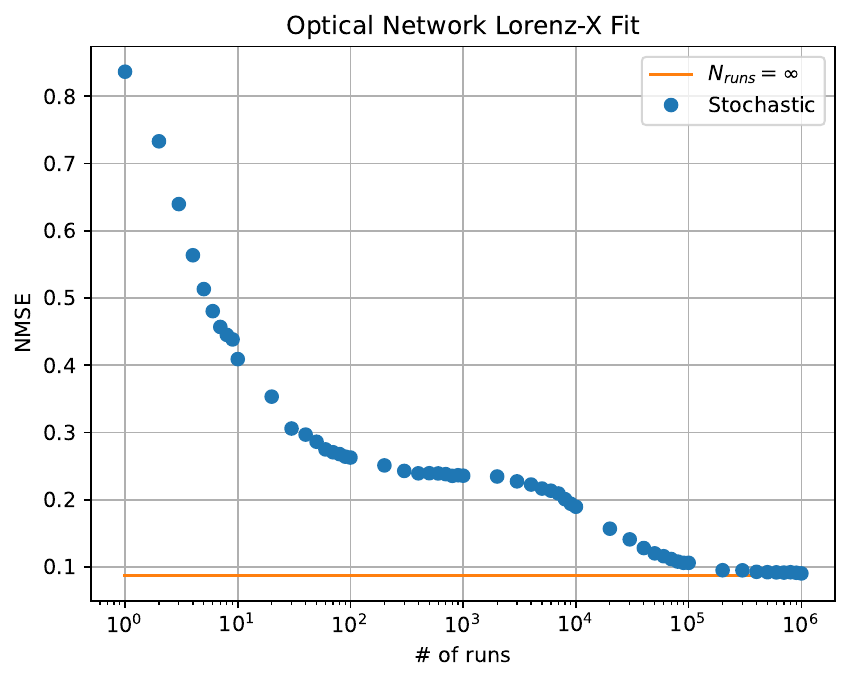}
    \includegraphics[width=0.45\linewidth]{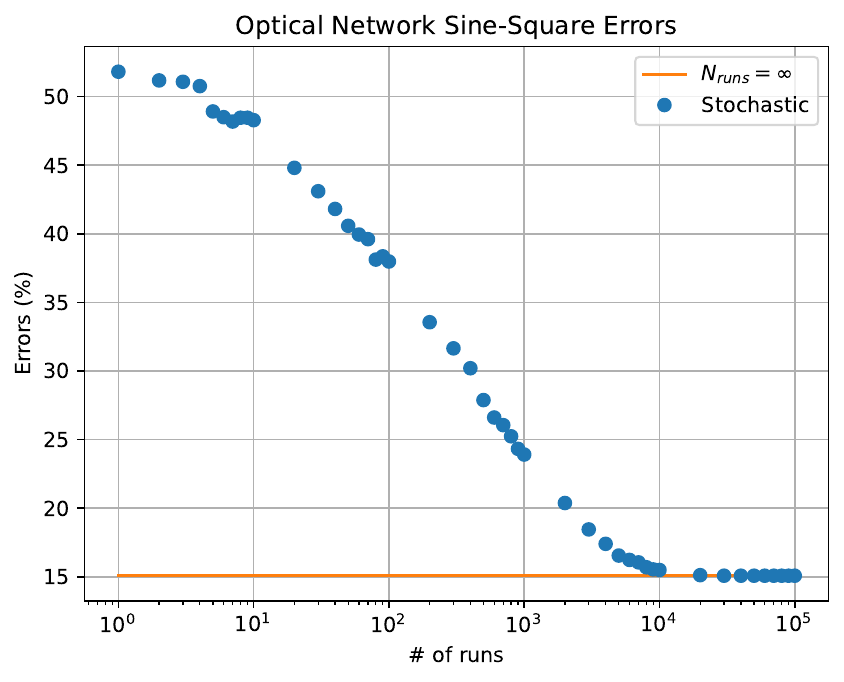}
    \caption{The left plot shows NMSE values for the Lorenz $X$ task, while the right plot shows the error percentage for the Sine-Square wave identification task, both as a function of the number of runs for the stochastic optical network. Each data point represents the relevant error measure of the stochastic ESN with fixed $A$ and $B$ using 2 detectors.}
    \label{fig:LorenzRuns}
\end{figure}

In our numerical analysis, we observed that when estimating the values of $P_{k,a}$ with this method, the performance of the stochastic RC seems to level off after a certain number of nodes. We provide an explanation for this phenomenon through an analysis of how the linear regression behaves under the noise introduced by the stochastic reservoir. Obtaining the optimal solution for the weight vector $W$ involves calculating the matrix $K=\frac{1}{N}\sum_k P_k P_k^\top$ and its inverse $K^{-1}$, where the sum over $k$ is taken over the $N$ training data steps. In general, $K$ has eigenvalues that span a wide range of scales, from some values that are order 1 to some that are many orders of magnitude less. With a finite number of shots, the matrix $K_{\mathrm{est}} = \frac{1}{N}\sum_k P_{k,\mathrm{est}} P_{k,\mathrm{est}}^\top$ using the estimated probabilities $P_{k,\mathrm{est}}\approx P_k$ has an expectation value given by
\begin{align}
\label{eq:Kexpect}
    \langle K_{\mathrm{est}} \rangle = \left(1-\frac{1}{N_{\mathrm{runs}}}\right)K + \frac{1}{N_{\mathrm{runs}}}P_k^{\mathrm{diag}},
\end{align}
where $(P_k^{\mathrm{diag}})_{ab} = \delta_{ab}P_{k,a}$ is the probability vector $P_k$ broadcast into a diagonal matrix. This expression is closely related to the eigentask analysis of the effects of noise in expressive capacity introduced in Ref. \cite{Hu:2023}, and for $N_{\mathrm{runs}}=1$ will lead to some of the results of Ref. \cite{Polloreno:2023pre}.

This performance hit is illustrated for a 2-detector example in Fig. \ref{fig:SineSquareRuns}, where we plot the error measures for both tasks using the qubit reservoir network as a function of $N_{\mathrm{runs}}$ for specific choices of $A$ and $B$. For the qubit reservoir network on the Lorenz $X$ task, the smallest eigenvalue of $K$ is $\lambda_{\min}=3.73*10^{-5}$, while for the Sine-Square wave tasks the smallest eigenvalue of $K$ is $\lambda_{\min}=5.24*10^{-4}$. We can then expect heuristically that, with $N_{\mathrm{runs}}=100000$ for the Lorenz $X$ task and $N_{\mathrm{runs}}=10000$ in the Sine-Square wave task, we should be able to resolve every eigenstate of $K$. This is exactly what we see in Fig. \ref{fig:SineSquareRuns}, where the error measures are only decreasing logarithmically with the number of runs until about $N_{\mathrm{runs}} =  8*10^4 \approx 3.0/\lambda_{\min}$ for the Lorenz $X$ task and $N_{\mathrm{runs}} = 4*10^3 \approx 2.1/\lambda_{\min}$, where they begin to converge to the error values derived using the exact probabilities.

We can perform the same analysis on the stochastic optical network for the 2-detector examples shown in Fig. \ref{fig:LorenzRuns}, where we plot the relevant error measures as a function of $N_{\mathrm{runs}}$ for specific choices of $A$ and $B$ (the same choices used in Fig. \ref{fig:SineSquareRuns}). For the Lorenz $X$ task, the smallest eigenvalue of $K$ is $\lambda_{\min}=1.32*10^{-5}$, while for the Sine-Square wave tasks the smallest eigenvalue of $K$ is $\lambda_{\min}=2.35*10^{-4}$. We can then expect heuristically that, with $N_{\mathrm{runs}}>100000$ for the Lorenz $X$ task and $N_{\mathrm{runs}}=10000$ in the Sine-Square wave task, we should be able to resolve every eigenstate of $K$. This is exactly what we see in Fig. \ref{fig:LorenzRuns}, where the error measures decrease logarithmically with the number of runs until roughly $N_{\mathrm{runs}} =  1.5*10^5 \approx 2.0/\lambda_{\min}$ for the Lorenz $X$ task and $N_{\mathrm{runs}} = 8*10^3 \approx 1.9/\lambda_{\min}$, where they begin to converge to the error values derived using the exact probabilities. Note that for the Lorenz $X$ task the trend appears to have a plateau from about $3*10^2$ to $2*10^3$ runs, which could have been taken as a signal that the theoretical minimum NMSE for this ESN is about $0.24$ in a realistic calculation where we have no prior knowledge of the true minimum. 

We can compare the performance boost in these examples due to increased statistics through Figs. \ref{fig:SineSquareRuns} and \ref{fig:LorenzRuns}. We see that in both cases the qubit reservoir network reaches the infinite statistics values a bit earlier than the stochastic optical network in both cases, which may explain the general performance difference we see in the other figures. Otherwise, the two plots are very similar, with the qubit reservoir perhaps showing a more steady improvement as a function of $N_{\mathrm{runs}}$ than the more curved trends for the optical network. Note that the values of $\lambda_{\min}$ we found for the Lorenz $X$ task are more than an order of magnitude smaller than for the Sine-Square wave task. This will influence where the stochastic ESN performance stalls out in Figs. 2 and 4, and it would explain why we needed $N_{\mathrm{runs}}=100000$ to get comparable results for the Lorenz $X$ task while only $N_{\mathrm{runs}}=10000$ was needed for the Sine-Square wave task.

\begin{figure}
    \centering
    \includegraphics[width=0.45\linewidth]{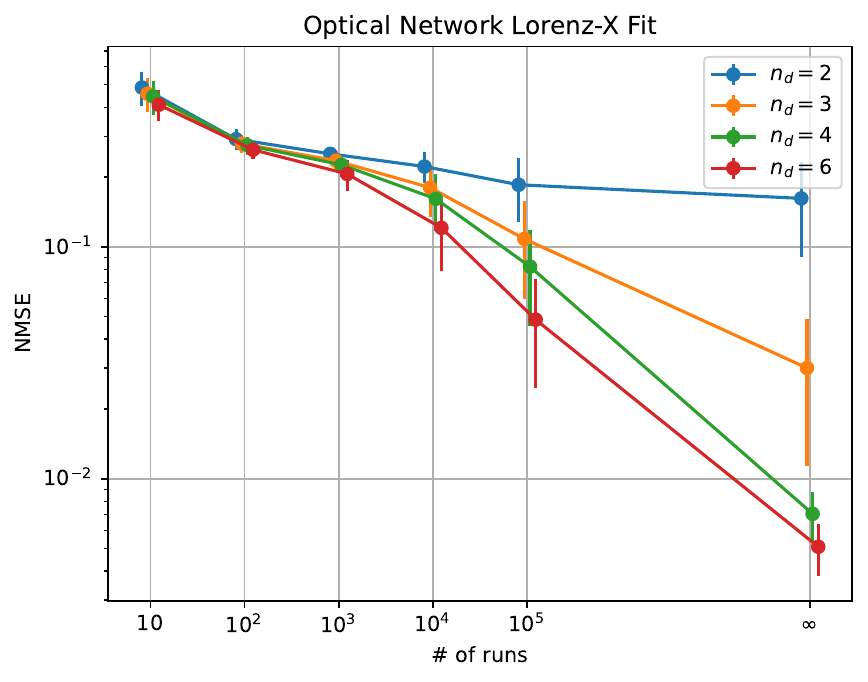}
    \includegraphics[width=0.45\linewidth]{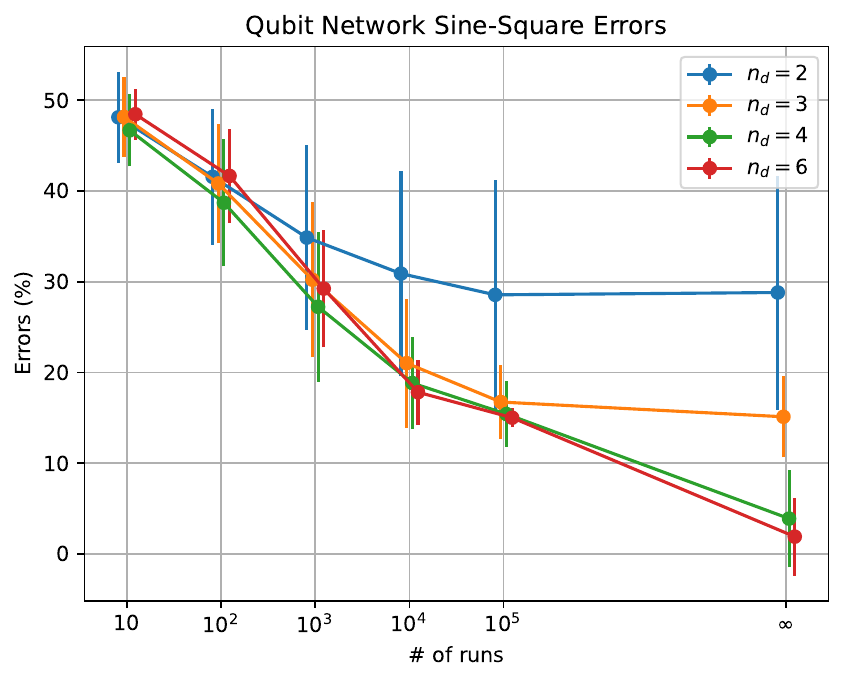}
    \caption{The left plot shows average NMSE values for the Lorenz $X$ task on the stochastic optical network, while the right plot shows the average error percentage for the Sine-Square wave identification task on the qubit reservoir network, both as a function of the number of runs of the stochastic ESN. The different lines shown correspond to the averages for ESNs with different numbers of detectors, specifically for 2, 3, 4, and 6 detectors. The point labeled $\infty$ on the x-axes of these plots correspond to the stochastic ESN results in the theoretical limit of infinite statistics, using the exact probabilities. Each data point represents the average over 100 random choices of $A$ and $B$, and the error bars give the standard deviations of each data point over the samples of ESNs with different choices of $A$ and $B$.}
    \label{fig:runs}
\end{figure}

\begin{figure}
    \centering
    \includegraphics[width=0.45\linewidth]{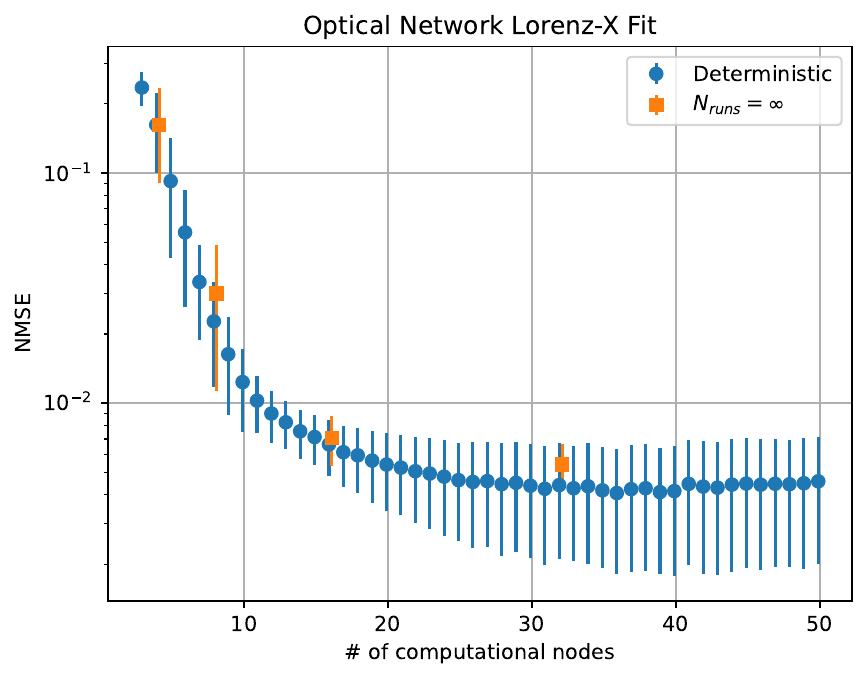}
    \includegraphics[width=0.45\linewidth]{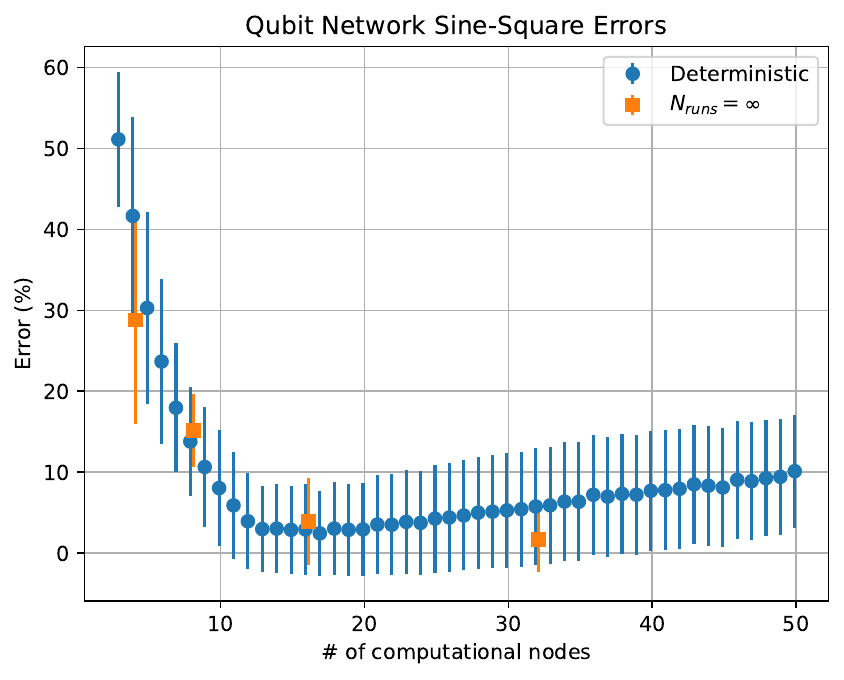}
    \caption{The left plot shows average NMSE values for the Lorenz $X$ task on the stochastic optical network, while the right plot shows the average error percentage for the Sine-Square wave identification task on the qubit reservoir network, both as a function of the number of computational nodes of the ESNs. The blue points show the NMSE values for the deterministic ESNs, while the orange points show the results for the stochastic ESN in the theoretical limit of infinite statistics, using the exact probabilities. Note that in this plot we are comparing the deterministic and stochastic reservoirs based on the number of computational nodes, rather than by the number of hardware detectors as done in previous figures. Each data point represents the average over 1000 random choices of $A$ and $B$, and the error bars give the standard deviations of each data point over the samples of ESNs with different choices of $A$ and $B$.}
    \label{fig:det}
\end{figure}

In Fig. \ref{fig:runs}, we show how the average performance of the stochastic ESNs approaches the limit of infinite statistics as the number of runs increases. This differs from Figs. \ref{fig:SineSquareRuns} and \ref{fig:LorenzRuns} in that it illustrates the average performance of the stochastic ESNs for $n_d\ge 2$ rather than the performance of specific ESNs with $n_d=2$. We see that the ESNs generally converge to the infinite run limit by 100000 runs when $n_d=2$, but for more detectors there remains a sizable gap in performance between these two points. This necessitates looking into noise mitigation techniques in future work to approach the theoretical limit more quickly for larger ESNs.

In the Discussion section, we observe a level off of the performance in the stochastic reservoirs as the number of detectors increase, and claim that this will also occur in deterministic RCs with more than 8 nodes. We suggest that this is due to the limited machine precision of our Python simulations, which induce errors that substantially limit the performance of larger RCs, even when the RC is otherwise deterministic.

In Fig. \ref{fig:det}, we compare the average performance of the deterministic and stochastic RCs in terms of the number of \textit{computational} nodes rather than the number of detectors as in previous figures. The contrast is in the fact that the previous figures were intended to illustrate the computational capabilities of deterministic and stochastic RCs as a function of their physical size, which is represented by the number of detectors. On the other hand, with Fig. \ref{fig:det}  we are now trying to present evidence of a purely computational phenomenon that arises as a result of training the weights associated with the RC, which seems to occur regardless of the physical characteristics of the RC, so a comparison based on the number of computational nodes is most appropriate here. For the deterministic RC, we have $n_c = n_d + 1$ because we have $\hat{y}_k=W^\top x_k + C$, with $n_d$ nodes coming from each component of $W$ and an extra node coming from $C$, while for stochastic RCs $n_c=2^{n_d}$ because $\hat{y}_k=W^\top P_k$. We emphasize again that for the same number of computational nodes $n_c$ a stochastic RC  can have a smaller physical size (smaller $n_d$) than its deterministic counterpart.

Fig. \ref{fig:det} shows the average performance of the deterministic RCs on the two tasks we focus on for up to 50 computational nodes, illustrating the level off claimed earlier. Also included are the data from the stochastic RC calculations in the limit of infinite statistics that were previously shown in Figs. 2 and 4. In the figure, we observe that the deterministic ESN in the Lorenz-X task exhibits diminishing returns after 10 computational nodes, and fails to show any improvement at around 30 nodes. For the number of errors in the Sine-Square task, we see that the best performance occurs at around 15 nodes in the deterministic reservoir, after which both the average and the variance increase with the number of nodes, presumably due to overfitting. This phenomenon is also evident in the stochastic reservoir using the exact probabilities, though for the Sine-Square wave task the best performance seems to occur closer to 32 computational nodes in that case.

Our explanation for why this occurs and how it relates to machine precision is the following.  In essence, the components of the readout vectors seem to be too similar to each other, so that the differences between them are orders of magnitude less than the vectors themselves. This means that the optimized training weights must amplify these small differences to get a different output function, which will lead to significant noise contamination if these differences are not resolved. In training, we must invert the matrix $K=\frac{1}{N}\sum_k P_k P_k^\top$ for the stochastic ESN, with $P_k$ replaced with $x_k$ for the deterministic ESNs. As the number of nodes increase in these tasks, we observe that the larger eigenvalues and their corresponding eigenvectors remain largely fixed as the number of nodes increases, while new eigenvectors are added which correspond to smaller and smaller eigenvalues. Eventually, these eigenvalues hit a lower limit based on machine precision where they tend to emerge at roughly the same order of magnitude rather than at lower magnitudes as before, indicating an fundamental change in the fit. We use Tikhonov regularization for the matrix inversion with a regularization parameter of $10^{-10}$, which provides the actual lower limit to these eigenvalues in our simulations and is known to produce an effect similar to noise contamination, but this regularization is also commonplace and necessary to avoid the direct consequences of machine precision errors in RCs with many computational nodes. Due to the wide-reaching implications of such an issue in reservoir computing, we believe that this deserves a more thorough investigation that is outside of the scope of this article.

\subsection{Simulation Details}
\label{sec:apnddetails}
\subsubsection{Task Details}
The Sine-Square wave identification task \cite{Dudas:2023} requires the ESN to identify whether a given section of the input is a sine wave or a square wave. The target sequence for this task is a random string $\{\tau(m)\}$ of ones and zeroes, with equal probability of choosing 1 or 0 for each $m$, where a zero corresponds to a sine wave input while a one corresponds to a square wave input. The period of both waves are tuned to be exactly 8 time steps, and at the end of each period the input wave has a random chance to switch to the other waveform. The formulas for theses waves in terms of $k$ are given by 
\begin{align}
    u_k &= \delta_{0,\tau\left(\mathrm{floor}\left(\frac{k}{8}\right)\right)}u_k^{(sin)} + \delta_{1,\tau\left(\mathrm{floor}\left(\frac{k}{8}\right)\right)}u_k^{(sqr)}, \\
    u_k^{(sin)} &= \sin\left(\frac{\pi}{4}k\right), \\
    u_k^{(sqr)} &= (-1)^{\mathrm{floor}\left(\frac{k}{4}\right)},
\end{align}
where $\mathrm{floor}\left(s\right)$ is the largest integer $n_s$ such that $n_s\leq s$ for any real number $s$. The task for the ESN is to distinguish the sine and square wave parts of the input, yielding an output of $\hat{y}_k<0.5$ for a sine wave and $\hat{y}_k>0.5$ for a square wave. In this work, we generated one random sine-square wave input and used it for all of the numerical work found in the following sections.

The other task we use in this work is the Lorenz $X$ task, which requires the ESN to approximate a chaotic dynamical system described by $X(t)$ in the Lorenz or equations:
\begin{align}
    \frac{dX}{dt}(t) &= a(Y-X) \\
    \frac{dY}{dt}(t) &= X(b-Z)-Y\\
    \frac{dZ}{dt}(t) &= XY-cZ
\end{align}
where we choose the standard values $a = 10, b = 28,$ and $c = 8/3$. We numerically approximate the solution to this equation using the fourth-order Runge-Kutta method with $\delta t = 1.0$ and initial conditions of $X(0) = 0.5, Y(0) = 0.1,$ and $Z(0) = 0.2$. We also run the solution for $1000$ steps before using it for the task, or in other words the target sequence we use actually starts with $y_{1000}$. The task for the ESN is to predict what the sequence will be 1 time step into the future. In other words, using the input sequence $(u_k)_{k\in\mathbb{Z}} = (y_{k-1})_{k\in\mathbb{Z}}$, we want the ESN to successfully predict $(y_k)_{k\in\mathbb{Z}}$.

\subsubsection{Simulation Parameters}
In the numerical analysis illustrated in Figs. 2 and 4, the left plot of both figures shows a comparison of NMSE values for the Lorenz $X$ task averaged over 100 different choices of $A$ matrices and $B$ vectors for the ESN, while the right plot shows a comparison of the percentage of errors recorded for the Sine-Square wave identification task averaged over 1000 choices. For the deterministic ESNs, we use the exact same choices of $A$ and $B$, using the probability function $\varrho_1(\zeta)$ as the deterministic activation function, so that the reservoir equations are given by
\begin{align}
    (x_{\mathrm{det}})_{k+1} &= \langle\Gamma(A (x_{\mathrm{det}})_k + B u_k)\rangle = \varrho_1(A (x_{\mathrm{det}})_k + B u_k) \\
    \hat{y}_k &= W (x_{\mathrm{det}})_k + C,
\end{align}
where the scalar activation function $\Gamma(\zeta)$ and controlled probability $\varrho_1(\zeta)$ are applied elementwise to their vector arguments. In Fig. 2, we use $\varrho_1(\zeta)$ given in Eq. (19) corresponding to the qubit reservoir network, while in Fig. 4 we use $\varrho_1(\zeta)$ given in Eq. (20) for the stochastic optical network. For the stochastic ESNs, we plot the cases where we average over a finite number of runs as well as the theoretical limit of infinite runs using the exact probabilities. We used 100000 runs in the finite case for the Lorenz $X$ task and 10000 runs for the Sine-Square wave task.

For each run, we used 3000 training data steps performed after 100 steps of initial startup time for the Lorenz $X$ task and 96 steps for the Sine-Square wave task. The values for the error measures shown here are for a set of time steps immediately after the training data set, totaling 500 steps for the Lorenz $X$ task and 504 for the Sine-Square wave task. These same parameters apply to the data in all of the figures from the previous section as well. However, for the Sine-Square wave plots in Figs. \ref{fig:SineSquareRuns} and \ref{fig:LorenzRuns}, each data point is an average over 10 repetitions of the error percentage calculation, using the same number of runs each time, to reduce the spread of the values in this plot.

\subsection{Proof of Theorem 3}
\label{sec:apndproof}
\begin{proof}
    Starting from the left-hand side of Eq. (22), we have for the stochastic reservoir map defined by $p(u)$ acting on two probability vectors $P_1, P_2\in S_M[1]$ and an input $u\in B_n[R_u]$
    \begin{align}
        ||f(P_1,u) - f(P_2,u)||_1 = ||p(u)(P_1 - P_2)||_1
    \end{align}
    Using the triangle inequality, we can see that 
    \begin{align}
    \label{eq:triproof}
        ||p(u)(P_1-P_2)||_1 &= \sum_{a=1}^M\left|\sum_{b=1}^Mp_{ab}(u)(P_1-P_2)_b\right| \\
        &\leq \sum_{a=1}^M\sum_{b=1}^M\left|p_{ab}(u)(P_1-P_2)_b\right| \\
        &= \sum_{b=1}^M\left(\sum_{a=1}^M p_{ab}(u)\right)\left|(P_1-P_2)_b\right| \\ 
        &= \sum_{b=1}^M(1)\left|(P_1-P_2)_b\right| = ||P_1-P_2||_1,
    \end{align}
    where in the third line we used the non-negativity of probabilities. 
    
    For the absolute value of two scalars $c$ and $d$, the inequality $|c+d|\leq |c|+|d|$ is saturated if and only if $c$ and $d$ are both non-negative or both non-positive, or more succinctly if the product $cd$ is non-negative. Thus for $n$ scalars the inequality $|\sum_{i=1}^nc_i|\leq \sum_{i=1}^n|c_i|$ is saturated if and only if all of the $c_i$'s are either non-negative or non-positive, or alternatively if the products $c_{i_1}c_{i_2}$ are non-negative for all $i_1, i_2\in\{1,\dots,n\}$. This means that for our use of the triangle inequality in Eq. \eqref{eq:triproof}, equality occurs if and only if for every $a\in{1,\dots,M}$ the products  $p_{ab_1}(u)p_{ab_2}(u)(P_1-P_2)_{b_1}(P_1-P_2)_{b_2}\geq 0$ for all $b_1, b_2\in\{1,\dots,M\}$. Therefore, the inequality is strict if and only if for some $a\in\{1,\dots,M\}$ there is a pair of distinct indices $b_1, b_2\in\{1,\dots,M\}$ for which $p_{ab_1}(u)p_{ab_2}(u)(P_1-P_2)_{b_1}(P_1-P_2)_{b_2}<0$. The inequality in Eq. \eqref{eq:triproof} will be strict if the triangle inequality is strict for just one index $a$ because the sum over $a$ in Eq. \eqref{eq:triproof} is over non-negative quantities, so if any one of the terms in the sum fails to saturate the triangle inequality, the entire sum will not saturate it, either. 
    
    For any two distinct probability distributions $||P_1-P_2||_1=\sum_{a=1}^M|P_{1,a}-P_{2,a}|>0$ but $\sum_{a=1}^M(P_{1,a}-P_{2,a}) = 1-1=0$. This implies that there is at least one element of $P_1-P_2$ that is positive and at least one other that is negative, so there will always be at least one pair of distinct indices $b_1, b_2$ for which $(P_1-P_2)_{b_1}(P_1-P_2)_{b_2}<0$. However, if we consider the distributions $P_{1,a} = \delta_{ab_1}$ and $P_{2,a} = \delta_{ab_2}$ with $b_1\neq b_2$, then we see that $b_1, b_2$ is the only pair of distinct indices for which $(P_1-P_2)_{b_1}(P_1-P_2)_{b_2}<0$. Thus in order for $p_{ab_1}(u)p_{ab_2}(u)(P_1-P_2)_{b_1}(P_1-P_2)_{b_2}<0$ to hold for any distinct probability distributions $P_1, P_2\in S_M(1)$, we must have that $p_{ab_1}(u)p_{ab_2}(u)\neq0$ for all pairs of distinct indices $b_1, b_2\in\{1,\dots,M\}$. Since $p_{ab}(u)$ is always non-negative, these inequalities are all satisfied when $p_{ab}(u)>0$ for all $b\in\{1,\dots,M\}$. Therefore, the inequality is strict for all distinct distributions $P_1, P_2\in S_M(1)$ if and only if for all pairs of distinct indices $b_1, b_2\in\{1,\dots,M\}$ we have for some $a\in\{1,\dots,M\}$ that $p_{ab_1}(u)p_{ab_2}(u)>0$.

    Finally, we note that because the controlled transition matrix $p(u)$ has non-negative entries for all inputs, $\sum_{a=1}^Mp_{ab_1}(u)p_{ab_2}(u)>0$ if and only if $p_{ab_1}(u)p_{ab_2}(u)>0$ for at least one $a\in\{1,\dots,M\}$. But $\sum_{a=1}^Mp_{ab_1}(u)p_{ab_2}(u) = \left(p^\top(u)p(u)\right)_{b_1b_2}$, so we can restate the condition for strict inequality using this matrix. Thus we have that the triangle inequality is strict for all distinct probability distributions $P_1, P_2\in S_M(1)$ and all inputs $u\in B_n[R_u]$ if and only if the matrix $p^\top(u)p(u)$ has strictly positive elements for all $u\in B_n[R_u]$. Also, because each element of $p(u)$ is continuous over all $u$ in the compact set $B_n[R_u]$, the image of $p(u)$ is also compact. This means that $d=\inf_u p^\top(u)p(u)$ is contained in the image of $p^\top(u)p(u)$ on $u\in B_n[R_u]$, and since $p^\top(u)p(u)$ has strictly positive elements for all $u\in B_n[R_u]$ we have that $d>0$, and so $p^\top(u)p(u)\geq d$ for some $d>0$. Therefore the contraction criterion
    \begin{align}
        ||p(u)(P_1-P_2)||_1 \leq \epsilon||P_1-P_2||_1
    \end{align}
    holds for a continuous probability matrix $p(u)$ and some $0<\epsilon<1$ if and only if the matrix $p^\top(u)p(u)$ has strictly positive elements for all inputs $u\in B_n[R_u]$ since $||p(u)(P_1-P_2)||_1 < ||P_1-P_2||_1$ for all distinct probability distributions $P_1, P_2\in S_M(1)$ and $||p(u)(P_1-P_2)||_1 = ||P_1-P_2||_1 = 0$ when $P_1=P_2$.
\end{proof}

In addition, we leave off with a remark that is intended to show that a fully stochastic reservoir map is not difficult to achieve. This is equivalent to the final, stronger convergence criterion given in Ref. \cite{Whitt:2013lec}.
\begin{remark}
    If for every $u\in B_n[R_u]$ there is at least one value of $a$ for which $p_{ab}(u)\neq 0$ for all $b\in\{1,\dots,M\}$, then $p^\top(u)p(u)$ has strictly positive elements for all inputs $u\in B_n[R_u]$.
\end{remark}
Since $\left(p^\top(u)p(u)\right)_{b_1b_2} = \sum_{a=1}^Mp_{ab_1}(u)p_{ab_2}(u)$, if there is at least one index $a$ for which $p_{ab}(u)\neq 0$ for all $b\in\{1,\dots,M\}$, then $p_{ab_1}(u)p_{ab_2}(u)>0$ for all $b_1, b_2\in\{1,\dots,M\}$ as well. If this holds for every input $u$, then this guarantees that $p^\top(u)p(u)$ always has strictly positive elements for all $u\in B_n[R_u]$. Thus if we can guarantee that for every $u$ there is at least one row of $p(u)$ for which every probability is nonzero, then it will be contracting.

\subsection{Note on a Recent Result}
\label{sec:apndrefnote}
Stochastic RCs as we define them have been considered in a recent paper by Polloreno \cite{Polloreno:2023pre}. In this work, it is shown that the information processing capacity of a single run of a stochastic RC only scales polynomially with the scale of the hardware. However, in our work we run the RC many times and get estimates for the probabilities $P_{k,a}$ by counting the number of occurrences of each outcome. The derivation of Theorem 2 in Ref. \cite{Polloreno:2023pre} is predicated on the fact that the second moment of the outcome probabilities $P_{k,a}$ is linear in these probabilities, but this is no longer true when $P_{k,a}$ is estimated with an average over multiple shots. In Supplementary Information \ref{sec:apndnoise}, we show that past a certain number of aggregated runs, the estimated probabilities provide a very good estimate of the results using the exact probabilities. Of course, it could still be argued that the number of runs needed to achieve this level of accuracy may be exponentially scaling with the scale of the hardware as well. In any case, we do not claim that stochastic RCs have exponentially scaling computational complexity. We merely state that the number of computational nodes of the reservoir is exponentially scaling, and these nodes may or may not be used effectively for any given computation.

\end{appendix}

\bibliography{SRC}


\begin{thebibliography}{44}
\ifx \bisbn   \undefined \def \bisbn  #1{ISBN #1}\fi
\ifx \binits  \undefined \def \binits#1{#1}\fi
\ifx \bauthor  \undefined \def \bauthor#1{#1}\fi
\ifx \batitle  \undefined \def \batitle#1{#1}\fi
\ifx \bjtitle  \undefined \def \bjtitle#1{#1}\fi
\ifx \bvolume  \undefined \def \bvolume#1{\textbf{#1}}\fi
\ifx \byear  \undefined \def \byear#1{#1}\fi
\ifx \bissue  \undefined \def \bissue#1{#1}\fi
\ifx \bfpage  \undefined \def \bfpage#1{#1}\fi
\ifx \blpage  \undefined \def \blpage #1{#1}\fi
\ifx \burl  \undefined \def \burl#1{\textsf{#1}}\fi
\ifx \doiurl  \undefined \def \doiurl#1{\url{https://doi.org/#1}}\fi
\ifx \betal  \undefined \def \betal{\textit{et al.}}\fi
\ifx \binstitute  \undefined \def \binstitute#1{#1}\fi
\ifx \binstitutionaled  \undefined \def \binstitutionaled#1{#1}\fi
\ifx \bctitle  \undefined \def \bctitle#1{#1}\fi
\ifx \beditor  \undefined \def \beditor#1{#1}\fi
\ifx \bpublisher  \undefined \def \bpublisher#1{#1}\fi
\ifx \bbtitle  \undefined \def \bbtitle#1{#1}\fi
\ifx \bedition  \undefined \def \bedition#1{#1}\fi
\ifx \bseriesno  \undefined \def \bseriesno#1{#1}\fi
\ifx \blocation  \undefined \def \blocation#1{#1}\fi
\ifx \bsertitle  \undefined \def \bsertitle#1{#1}\fi
\ifx \bsnm \undefined \def \bsnm#1{#1}\fi
\ifx \bsuffix \undefined \def \bsuffix#1{#1}\fi
\ifx \bparticle \undefined \def \bparticle#1{#1}\fi
\ifx \barticle \undefined \def \barticle#1{#1}\fi
\bibcommenthead
\ifx \bconfdate \undefined \def \bconfdate #1{#1}\fi
\ifx \botherref \undefined \def \botherref #1{#1}\fi
\ifx \url \undefined \def \url#1{\textsf{#1}}\fi
\ifx \bchapter \undefined \def \bchapter#1{#1}\fi
\ifx \bbook \undefined \def \bbook#1{#1}\fi
\ifx \bcomment \undefined \def \bcomment#1{#1}\fi
\ifx \oauthor \undefined \def \oauthor#1{#1}\fi
\ifx \citeauthoryear \undefined \def \citeauthoryear#1{#1}\fi
\ifx \endbibitem  \undefined \def \endbibitem {}\fi
\ifx \bconflocation  \undefined \def \bconflocation#1{#1}\fi
\ifx \arxivurl  \undefined \def \arxivurl#1{\textsf{#1}}\fi
\csname PreBibitemsHook\endcsname

\bibitem[\protect\citeauthoryear{Antonik et~al.}{2018}]{antonik2018using}
\begin{barticle}
\bauthor{\bsnm{Antonik}, \binits{P.}},
\bauthor{\bsnm{Gulina}, \binits{M.}},
\bauthor{\bsnm{Pauwels}, \binits{J.}},
\bauthor{\bsnm{Massar}, \binits{S.}}:
\batitle{Using a reservoir computer to learn chaotic attractors, with applications to chaos synchronization and cryptography}.
\bjtitle{Physical Review E}
\bvolume{98}(\bissue{1}),
\bfpage{012215}
(\byear{2018})
\end{barticle}
\endbibitem

\bibitem[\protect\citeauthoryear{Chen and Nurdin}{2022}]{Chen:2022}
\begin{barticle}
\bauthor{\bsnm{Chen}, \binits{J.}},
\bauthor{\bsnm{Nurdin}, \binits{H.I.}}:
\batitle{Nonlinear autoregression with convergent dynamics on novel computational platforms}.
\bjtitle{IEEE Transactions on Control Systems Technology}
\bvolume{30}(\bissue{5}),
\bfpage{2228}--\blpage{2234}
(\byear{2022})
\doiurl{10.1109/TCST.2021.3136227}
\end{barticle}
\endbibitem

\bibitem[\protect\citeauthoryear{Lu et~al.}{2017}]{LPHGBO17}
\begin{barticle}
\bauthor{\bsnm{Lu}, \binits{Z.}}, \betal:
\batitle{Reservoir observers: Model-free inference of unmeasured variables in chaotic systems}.
\bjtitle{Chaos}
\bvolume{27},
\bfpage{041102}
(\byear{2017})
\end{barticle}
\endbibitem

\bibitem[\protect\citeauthoryear{Larger et~al.}{2017}]{larger2017high}
\begin{barticle}
\bauthor{\bsnm{Larger}, \binits{L.}}, \betal:
\batitle{High-speed photonic reservoir computing using a time-delay-based architecture: Million words per second classification}.
\bjtitle{Physical Review X}
\bvolume{7}(\bissue{1}),
\bfpage{011015}
(\byear{2017})
\end{barticle}
\endbibitem

\bibitem[\protect\citeauthoryear{Canaday et~al.}{2018}]{canaday2018rapid}
\begin{barticle}
\bauthor{\bsnm{Canaday}, \binits{D.}},
\bauthor{\bsnm{Griffith}, \binits{A.}},
\bauthor{\bsnm{Gauthier}, \binits{D.J.}}:
\batitle{Rapid time series prediction with a hardware-based reservoir computer}.
\bjtitle{Chaos: An Interdisciplinary Journal of Nonlinear Science}
\bvolume{28}(\bissue{12}),
\bfpage{123119}
(\byear{2018})
\end{barticle}
\endbibitem

\bibitem[\protect\citeauthoryear{Pathak et~al.}{2018}]{pathak2018model}
\begin{barticle}
\bauthor{\bsnm{Pathak}, \binits{J.}}, \betal:
\batitle{Model-free prediction of large spatiotemporally chaotic systems from data: A reservoir computing approach}.
\bjtitle{Physical review letters}
\bvolume{120}(\bissue{2}),
\bfpage{024102}
(\byear{2018})
\end{barticle}
\endbibitem

\bibitem[\protect\citeauthoryear{Rafayelyan et~al.}{2020}]{Rafayelyan20}
\begin{barticle}
\bauthor{\bsnm{Rafayelyan}, \binits{M.}}, \betal:
\batitle{Large-scale optical reservoir computing for spatiotemporal chaotic systems prediction}.
\bjtitle{Phys. Rev. X}
\bvolume{10},
\bfpage{041037}
(\byear{2020})
\end{barticle}
\endbibitem

\bibitem[\protect\citeauthoryear{Antonik et~al.}{2020}]{Antonik:2020}
\begin{barticle}
\bauthor{\bsnm{Antonik}, \binits{P.}},
\bauthor{\bsnm{Marsal}, \binits{N.}},
\bauthor{\bsnm{Rontani}, \binits{D.}}:
\batitle{Large-scale spatiotemporal photonic reservoir computer for image classification}.
\bjtitle{IEEE Journal of Selected Topics in Quantum Electronics}
\bvolume{26}(\bissue{1}),
\bfpage{1}--\blpage{12}
(\byear{2020})
\doiurl{10.1109/JSTQE.2019.2924138}
\end{barticle}
\endbibitem

\bibitem[\protect\citeauthoryear{Bianchi et~al.}{2021}]{Bianchi:2021}
\begin{barticle}
\bauthor{\bsnm{Bianchi}, \binits{F.M.}},
\bauthor{\bsnm{Scardapane}, \binits{S.}},
\bauthor{\bsnm{Løkse}, \binits{S.}},
\bauthor{\bsnm{Jenssen}, \binits{R.}}:
\batitle{Reservoir computing approaches for representation and classification of multivariate time series}.
\bjtitle{IEEE Transactions on Neural Networks and Learning Systems}
\bvolume{32}(\bissue{5}),
\bfpage{2169}--\blpage{2179}
(\byear{2021})
\doiurl{10.1109/TNNLS.2020.3001377}
\end{barticle}
\endbibitem

\bibitem[\protect\citeauthoryear{Carroll}{2021}]{Carroll:2021}
\begin{barticle}
\bauthor{\bsnm{Carroll}, \binits{T.L.}}:
\batitle{Optimizing reservoir computers for signal classification}.
\bjtitle{Frontiers in Physiology}
\bvolume{12},
\bfpage{685121}
(\byear{2021})
\end{barticle}
\endbibitem

\bibitem[\protect\citeauthoryear{Angelatos et~al.}{2017}]{AKT21}
\begin{barticle}
\bauthor{\bsnm{Angelatos}, \binits{G.}},
\bauthor{\bsnm{Khan}, \binits{S.A.}},
\bauthor{\bsnm{T\"{u}reci}, \binits{H.}}:
\batitle{Reservoir computing approach to quantum state measurement}.
\bjtitle{Physical Review X}
\bvolume{11},
\bfpage{041062}
(\byear{2017})
\end{barticle}
\endbibitem

\bibitem[\protect\citeauthoryear{Tanaka et~al.}{2019}]{Tanaka:2019}
\begin{barticle}
\bauthor{\bsnm{Tanaka}, \binits{G.}},
\bauthor{\bsnm{Yamane}, \binits{T.}},
\bauthor{\bsnm{H{\'e}roux}, \binits{J.B.}},
\bauthor{\bsnm{Nakane}, \binits{R.}},
\bauthor{\bsnm{Kanazawa}, \binits{N.}},
\bauthor{\bsnm{Takeda}, \binits{S.}},
\bauthor{\bsnm{Numata}, \binits{H.}},
\bauthor{\bsnm{Nakano}, \binits{D.}},
\bauthor{\bsnm{Hirose}, \binits{A.}}:
\batitle{Recent advances in physical reservoir computing: A review}.
\bjtitle{Neural Networks}
\bvolume{115},
\bfpage{100}--\blpage{123}
(\byear{2019})
\end{barticle}
\endbibitem

\bibitem[\protect\citeauthoryear{Grigoryeva and Ortega}{2018a}]{Grigoryeva:2018a}
\begin{barticle}
\bauthor{\bsnm{Grigoryeva}, \binits{L.}},
\bauthor{\bsnm{Ortega}, \binits{J.-P.}}:
\batitle{Universal discrete-time reservoir computers with stochastic inputs and linear readouts using non-homogeneous state-affine systems}.
\bjtitle{Journal of Machine Learning Research}
\bvolume{19}(\bissue{24}),
\bfpage{1}--\blpage{40}
(\byear{2018})
\end{barticle}
\endbibitem

\bibitem[\protect\citeauthoryear{Grigoryeva and Ortega}{2018b}]{Grigoryeva:2018b}
\begin{barticle}
\bauthor{\bsnm{Grigoryeva}, \binits{L.}},
\bauthor{\bsnm{Ortega}, \binits{J.-P.}}:
\batitle{Echo state networks are universal}.
\bjtitle{Neural Networks}
\bvolume{108},
\bfpage{495}--\blpage{508}
(\byear{2018})
\doiurl{10.1016/j.neunet.2018.08.025}
\end{barticle}
\endbibitem

\bibitem[\protect\citeauthoryear{Bishop}{2022}]{Bishop:2022}
\begin{botherref}
\oauthor{\bsnm{Bishop}, \binits{A.N.}}:
Universal Time-Uniform Trajectory Approximation for Random Dynamical Systems with Recurrent Neural Networks.
Preprint at \href{https://arxiv.org/abs/2211.08018}{arXiv:2211.08018}
(2022)
\end{botherref}
\endbibitem

\bibitem[\protect\citeauthoryear{Chen and Nurdin}{2019}]{Chen:2019}
\begin{botherref}
\oauthor{\bsnm{Chen}, \binits{J.}},
\oauthor{\bsnm{Nurdin}, \binits{H.I.}}:
Learning nonlinear input–output maps with dissipative quantum systems.
Quantum Information Processing
\textbf{18}(7)
(2019)
\doiurl{10.1007/s11128-019-2311-9}
\end{botherref}
\endbibitem

\bibitem[\protect\citeauthoryear{Gonon and Ortega}{2020}]{Gonon:2020}
\begin{barticle}
\bauthor{\bsnm{Gonon}, \binits{L.}},
\bauthor{\bsnm{Ortega}, \binits{J.-P.}}:
\batitle{Reservoir computing universality with stochastic inputs}.
\bjtitle{IEEE Transactions on Neural Networks and Learning Systems}
\bvolume{31}(\bissue{1}),
\bfpage{100}--\blpage{112}
(\byear{2020})
\doiurl{10.1109/TNNLS.2019.2899649}
\end{barticle}
\endbibitem

\bibitem[\protect\citeauthoryear{Hart et~al.}{2020}]{Hart:2020}
\begin{barticle}
\bauthor{\bsnm{Hart}, \binits{A.}},
\bauthor{\bsnm{Hook}, \binits{J.}},
\bauthor{\bsnm{Dawes}, \binits{J.}}:
\batitle{Embedding and approximation theorems for echo state networks}.
\bjtitle{Neural Networks}
\bvolume{128},
\bfpage{234}--\blpage{247}
(\byear{2020})
\doiurl{10.1016/j.neunet.2020.05.013}
\end{barticle}
\endbibitem

\bibitem[\protect\citeauthoryear{Li et~al.}{2024}]{LiBoyu:2023}
\begin{barticle}
\bauthor{\bsnm{Li}, \binits{B.}},
\bauthor{\bsnm{Fong}, \binits{R.S.}},
\bauthor{\bsnm{Tino}, \binits{P.}}:
\batitle{Simple cycle reservoirs are universal}.
\bjtitle{Journal of Machine Learning Research}
\bvolume{25}(\bissue{158}),
\bfpage{1}--\blpage{28}
(\byear{2024})
\end{barticle}
\endbibitem

\bibitem[\protect\citeauthoryear{Li and Yang}{2023}]{LiZhen:2023}
\begin{botherref}
\oauthor{\bsnm{Li}, \binits{Z.}},
\oauthor{\bsnm{Yang}, \binits{Y.}}:
Universality and approximation bounds for echo state networks with random weights.
IEEE Transactions on Neural Networks and Learning Systems,
1--13
(2023)
\doiurl{10.1109/TNNLS.2023.3339512}
\end{botherref}
\endbibitem

\bibitem[\protect\citeauthoryear{Mart\'{\i}nez-Pe\~na and Ortega}{2023}]{Martinez:2023}
\begin{barticle}
\bauthor{\bsnm{Mart\'{\i}nez-Pe\~na}, \binits{R.}},
\bauthor{\bsnm{Ortega}, \binits{J.-P.}}:
\batitle{Quantum reservoir computing in finite dimensions}.
\bjtitle{Phys. Rev. E}
\bvolume{107},
\bfpage{035306}
(\byear{2023})
\doiurl{10.1103/PhysRevE.107.035306}
\end{barticle}
\endbibitem

\bibitem[\protect\citeauthoryear{Nokkala et~al.}{2021}]{Nokkala:2021}
\begin{botherref}
\oauthor{\bsnm{Nokkala}, \binits{J.}},
\oauthor{\bsnm{Martínez-Peña}, \binits{R.}},
\oauthor{\bsnm{Giorgi}, \binits{G.L.}},
\oauthor{\bsnm{Parigi}, \binits{V.}},
\oauthor{\bsnm{Soriano}, \binits{M.C.}},
\oauthor{\bsnm{Zambrini}, \binits{R.}}:
Gaussian states of continuous-variable quantum systems provide universal and versatile reservoir computing.
Communications Physics
\textbf{4}(1)
(2021)
\doiurl{10.1038/s42005-021-00556-w}
\end{botherref}
\endbibitem

\bibitem[\protect\citeauthoryear{Sugiura et~al.}{2024}]{Sugiura:2024}
\begin{barticle}
\bauthor{\bsnm{Sugiura}, \binits{S.}},
\bauthor{\bsnm{Ariizumi}, \binits{R.}},
\bauthor{\bsnm{Asai}, \binits{T.}},
\bauthor{\bsnm{Azuma}, \binits{S.-i.}}:
\batitle{Existence of reservoir with finite-dimensional output for universal reservoir computing}.
\bjtitle{Scientific Reports}
\bvolume{14},
\bfpage{8448}
(\byear{2024})
\doiurl{10.1038/s41598-024-56742-7}
\end{barticle}
\endbibitem

\bibitem[\protect\citeauthoryear{Chowdhury et~al.}{2023}]{chowdhury24}
\begin{barticle}
\bauthor{\bsnm{Chowdhury}, \binits{S.}}, \betal:
\batitle{A full-stack view of probabilistic computing with p-bits: Devices, architectures, and algorithms}.
\bjtitle{IEEE J. Explor. Solid-State Comput. Devices}
\bvolume{9}(\bissue{1}),
\bfpage{1}--\blpage{11}
(\byear{2023})
\end{barticle}
\endbibitem

\bibitem[\protect\citeauthoryear{Roques-Carmes et~al.}{2023}]{roques-carmes23}
\begin{barticle}
\bauthor{\bsnm{Roques-Carmes}, \binits{C.}}, \betal:
\batitle{Biasing the quantum vacuum to control macroscopic probability distributions}.
\bjtitle{Science}
\bvolume{381}(\bissue{6654}),
\bfpage{205}--\blpage{209}
(\byear{2023})
\end{barticle}
\endbibitem

\bibitem[\protect\citeauthoryear{Misra et~al.}{2022}]{misra22}
\begin{barticle}
\bauthor{\bsnm{Misra}, \binits{S.}}, \betal:
\batitle{Probabilistic neural computing with stochastic devices}.
\bjtitle{Advance Materials}
\bvolume{35}(\bissue{37}),
\bfpage{2204569}
(\byear{2022})
\end{barticle}
\endbibitem

\bibitem[\protect\citeauthoryear{Ross}{2023}]{Ross:2023}
\begin{bbook}
\bauthor{\bsnm{Ross}, \binits{S.M.}}:
\bbtitle{Introduction to Probability Models}.
\bsertitle{13e}.
\bpublisher{Academic Press},
\blocation{Amsterdam, Boston}
(\byear{2023})
\end{bbook}
\endbibitem

\bibitem[\protect\citeauthoryear{Meyn et~al.}{2009}]{Meyn:2009}
\begin{bbook}
\bauthor{\bsnm{Meyn}, \binits{S.}},
\bauthor{\bsnm{Tweedie}, \binits{R.L.}},
\bauthor{\bsnm{Glynn}, \binits{P.W.}}:
\bbtitle{Markov Chains and Stochastic Stability}.
\bsertitle{2e}.
\bpublisher{Cambridge University Press},
\blocation{Cambridge}
(\byear{2009})
\end{bbook}
\endbibitem

\bibitem[\protect\citeauthoryear{Dudas et~al.}{2023}]{Dudas:2023}
\begin{barticle}
\bauthor{\bsnm{Dudas}, \binits{J.}},
\bauthor{\bsnm{Carles}, \binits{B.}},
\bauthor{\bsnm{Plouet}, \binits{E.}},
\bauthor{\bsnm{Mizrahi}, \binits{F.A.}},
\bauthor{\bsnm{Grollier}, \binits{J.}},
\bauthor{\bsnm{Markovi{\'c}}, \binits{D.}}:
\batitle{Quantum reservoir computing implementation on coherently coupled quantum oscillators}.
\bjtitle{npj Quantum Information}
\bvolume{9}(\bissue{1}),
\bfpage{64}
(\byear{2023})
\end{barticle}
\endbibitem

\bibitem[\protect\citeauthoryear{Ehlers et~al.}{2025}]{figshare}
\begin{botherref}
\oauthor{\bsnm{Ehlers}, \binits{P.J.}},
\oauthor{\bsnm{Nurdin}, \binits{H.I.}},
\oauthor{\bsnm{Soh}, \binits{D.}}:
Stochastic Reservoir Computers.
Dataset.
figshare
(2025).
\doiurl{10.6084/m9.figshare.27957141}
\end{botherref}
\endbibitem

\bibitem[\protect\citeauthoryear{Ma et~al.}{2025}]{Ma:2023pre}
\begin{barticle}
\bauthor{\bsnm{Ma}, \binits{S.-Y.}},
\bauthor{\bsnm{Wang}, \binits{T.}},
\bauthor{\bsnm{Laydevant}, \binits{J.}},
\bauthor{\bsnm{Wright}, \binits{L.G.}},
\bauthor{\bsnm{McMahon}, \binits{P.L.}}:
\batitle{Quantum-limited stochastic optical neural networks operating at a few quanta per activation}.
\bjtitle{Nature Communications}
\bvolume{16}(\bissue{1}),
\bfpage{359}
(\byear{2025})
\end{barticle}
\endbibitem

\bibitem[\protect\citeauthoryear{G\"otting et~al.}{2023}]{Gotting:2023}
\begin{barticle}
\bauthor{\bsnm{G\"otting}, \binits{N.}},
\bauthor{\bsnm{Lohof}, \binits{F.}},
\bauthor{\bsnm{Gies}, \binits{C.}}:
\batitle{Exploring quantumness in quantum reservoir computing}.
\bjtitle{Phys. Rev. A}
\bvolume{108},
\bfpage{052427}
(\byear{2023})
\doiurl{10.1103/PhysRevA.108.052427}
\end{barticle}
\endbibitem

\bibitem[\protect\citeauthoryear{Kenigsberg et~al.}{2006}]{Kenigsberg:2006}
\begin{barticle}
\bauthor{\bsnm{Kenigsberg}, \binits{D.}},
\bauthor{\bsnm{Mor}, \binits{T.}},
\bauthor{\bsnm{Ratsaby}, \binits{G.}}:
\batitle{Quantum advantage without entanglement.}
\bjtitle{Quantum Inf. Comput.}
\bvolume{6}(\bissue{7}),
\bfpage{606}--\blpage{615}
(\byear{2006})
\end{barticle}
\endbibitem

\bibitem[\protect\citeauthoryear{Kora et~al.}{2024}]{Kora:2024}
\begin{barticle}
\bauthor{\bsnm{Kora}, \binits{Y.}},
\bauthor{\bsnm{Zadeh-Haghighi}, \binits{H.}},
\bauthor{\bsnm{Stewart}, \binits{T.C.}},
\bauthor{\bsnm{Heshami}, \binits{K.}},
\bauthor{\bsnm{Simon}, \binits{C.}}:
\batitle{Frequency- and dissipation-dependent entanglement advantage in spin-network quantum reservoir computing}.
\bjtitle{Phys. Rev. A}
\bvolume{110},
\bfpage{042416}
(\byear{2024})
\doiurl{10.1103/PhysRevA.110.042416}
\end{barticle}
\endbibitem

\bibitem[\protect\citeauthoryear{Mujal et~al.}{2023}]{Mujal:2023}
\begin{barticle}
\bauthor{\bsnm{Mujal}, \binits{P.}},
\bauthor{\bsnm{Mart{\'\i}nez-Pe{\~n}a}, \binits{R.}},
\bauthor{\bsnm{Giorgi}, \binits{G.L.}},
\bauthor{\bsnm{Soriano}, \binits{M.C.}},
\bauthor{\bsnm{Zambrini}, \binits{R.}}:
\batitle{Time-series quantum reservoir computing with weak and projective measurements}.
\bjtitle{npj Quantum Information}
\bvolume{9}(\bissue{1}),
\bfpage{16}
(\byear{2023})
\end{barticle}
\endbibitem

\bibitem[\protect\citeauthoryear{McClean et~al.}{2018}]{McClean:2018}
\begin{barticle}
\bauthor{\bsnm{McClean}, \binits{J.R.}},
\bauthor{\bsnm{Boixo}, \binits{S.}},
\bauthor{\bsnm{Smelyanskiy}, \binits{V.N.}},
\bauthor{\bsnm{Babbush}, \binits{R.}},
\bauthor{\bsnm{Neven}, \binits{H.}}:
\batitle{Barren plateaus in quantum neural network training landscapes}.
\bjtitle{Nature communications}
\bvolume{9}(\bissue{1}),
\bfpage{4812}
(\byear{2018})
\end{barticle}
\endbibitem

\bibitem[\protect\citeauthoryear{Kolen and Kremer}{2001}]{Kolen:2001}
\begin{bbook}
\bauthor{\bsnm{Kolen}, \binits{J.F.}},
\bauthor{\bsnm{Kremer}, \binits{S.C.}}:
\bbtitle{Gradient Flow in Recurrent Nets: The Difficulty of Learning Long Term Dependencies},
pp. \bfpage{237}--\blpage{243}.
\bpublisher{Wiley-IEEE Press},
\blocation{New {Y}ork}
(\byear{2001}).
\doiurl{10.1109/9780470544037.ch14}
\end{bbook}
\endbibitem

\bibitem[\protect\citeauthoryear{Shalev-Shwartz et~al.}{2017}]{ShalevShwartz:2017}
\begin{bchapter}
\bauthor{\bsnm{Shalev-Shwartz}, \binits{S.}},
\bauthor{\bsnm{Shamir}, \binits{O.}},
\bauthor{\bsnm{Shammah}, \binits{S.}}:
\bctitle{Failures of gradient-based deep learning}.
In: \bbtitle{Proceedings of the 34th International Conference on Machine Learning}.
\bsertitle{Proceedings of Machine Learning Research},
vol. \bseriesno{70},
pp. \bfpage{3067}--\blpage{3075}.
\bpublisher{PMLR},
\blocation{Sydney}
(\byear{2017}).
\burl{https://proceedings.mlr.press/v70/shalev-shwartz17a.html}
\end{bchapter}
\endbibitem

\bibitem[\protect\citeauthoryear{Hu et~al.}{2023}]{Hu:2023}
\begin{barticle}
\bauthor{\bsnm{Hu}, \binits{F.}},
\bauthor{\bsnm{Angelatos}, \binits{G.}},
\bauthor{\bsnm{Khan}, \binits{S.A.}},
\bauthor{\bsnm{Vives}, \binits{M.}},
\bauthor{\bsnm{T\"ureci}, \binits{E.}},
\bauthor{\bsnm{Bello}, \binits{L.}},
\bauthor{\bsnm{Rowlands}, \binits{G.E.}},
\bauthor{\bsnm{Ribeill}, \binits{G.J.}},
\bauthor{\bsnm{T\"ureci}, \binits{H.E.}}:
\batitle{Tackling sampling noise in physical systems for machine learning applications: Fundamental limits and eigentasks}.
\bjtitle{Phys. Rev. X}
\bvolume{13},
\bfpage{041020}
(\byear{2023})
\doiurl{10.1103/PhysRevX.13.041020}
\end{barticle}
\endbibitem

\bibitem[\protect\citeauthoryear{Dieudonn{\'e}}{1960}]{Dieudonne:2011}
\begin{bbook}
\bauthor{\bsnm{Dieudonn{\'e}}, \binits{J.}}:
\bbtitle{Foundations of Modern Analysis}.
\bpublisher{Academic Press},
\blocation{New {Y}ork}
(\byear{1960})
\end{bbook}
\endbibitem

\bibitem[\protect\citeauthoryear{Lalley}{2016}]{Lalley:2016lec}
\begin{botherref}
\oauthor{\bsnm{Lalley}, \binits{S.}}:
Markov Chains: Basic Theory.
\href{https://galton.uchicago.edu/~lalley/Courses/312/MarkovChains.pdf}{MarkovChains.pdf}
(2016)
\end{botherref}
\endbibitem

\bibitem[\protect\citeauthoryear{Whitt}{2013}]{Whitt:2013lec}
\begin{botherref}
\oauthor{\bsnm{Whitt}, \binits{W.}}:
Stochastic Models I: Markov Chains.
\href{http://www.columbia.edu/~ww2040/6711F13/lect1029.pdf}{lect1029.pdf}
(2013)
\end{botherref}
\endbibitem

\bibitem[\protect\citeauthoryear{Heckerman}{2008}]{Heckerman:2008}
\begin{bbook}
\bauthor{\bsnm{Heckerman}, \binits{D.}}:
In: \beditor{\bsnm{Holmes}, \binits{D.E.}},
\beditor{\bsnm{Jain}, \binits{L.C.}} (eds.)
\bbtitle{A Tutorial on Learning with Bayesian Networks},
pp. \bfpage{33}--\blpage{82}.
\bpublisher{Springer},
\blocation{Berlin, Heidelberg}
(\byear{2008}).
\doiurl{10.1007/978-3-540-85066-3_3}
\end{bbook}
\endbibitem

\bibitem[\protect\citeauthoryear{Polloreno}{2023}]{Polloreno:2023pre}
\begin{botherref}
\oauthor{\bsnm{Polloreno}, \binits{A.M.}}:
Limits to Reservoir Learning.
Preprint at \href{https://arxiv.org/abs/2307.14474}{arXiv:2307.14474}
(2023)
\end{botherref}
\endbibitem

\end{thebibliography}

\end{document}